\documentclass{article}

\usepackage[preprint]{neurips_2026}

\usepackage[utf8]{inputenc} 
\usepackage[T1]{fontenc}    
\usepackage{hyperref}       
\usepackage{url}            
\usepackage{booktabs}       
\usepackage{amsfonts}       
\usepackage{nicefrac}       
\usepackage{microtype}      
\usepackage{xcolor}         

\usepackage{booktabs}
\usepackage{multirow}
\usepackage[table]{xcolor}
\usepackage{adjustbox}
\usepackage{url}
\usepackage{caption}
\usepackage{amsmath}
\usepackage{amsthm}
\usepackage{algorithm, algorithmic}
\usepackage{enumitem}
\usepackage{tcolorbox}
\usepackage{tabularx}
\usepackage{array}
\usepackage{makecell}
\usepackage{colortbl}
\newcolumntype{Y}{>{\raggedright\arraybackslash}X}

\newcommand{\gray}[1]{\textcolor{gray}{#1}}

\newcommand{\unmarkedfootnote}[1]{%
  \begingroup
  \renewcommand{\thefootnote}{}%
  \footnotetext{#1}%
  \endgroup
}

\definecolor{lightbestrow}{HTML}{F1F8F2}   
\definecolor{bestrow}{HTML}{E8F5E9}   
\definecolor{darkbestrow}{HTML}{D7EED9}   
\definecolor{headerblue}{HTML}{1565C0} 
\definecolor{lightgray}{HTML}{F5F5F5}

\theoremstyle{plain}
\newtheorem{theorem}{Theorem}[section]

\newtheorem{lemma}[theorem]{Lemma}

\theoremstyle{definition}
\newtheorem{definition}[theorem]{Definition}

\theoremstyle{remark}

\title{RelFlexformer: Efficient Attention 3D-Transformers for Integrable Relative Positional Encodings}

\author{Byeongchan Kim\thanks{Equal Contribution}\footnotemark[1]$^{*\color{purple}{\textbf{1}}}$\quad Arijit Sehanobish\footnotemark[1]$^{*\color{purple}{\textbf{2}}}$\quad \textbf{Avinava Dubey}\footnotemark[1]$^{*\color{purple}{\textbf{3}}}$\\
[0.1em]
\textbf{Min-hwan Oh}$^{\color{purple}{\textbf{1}}}$\quad \textbf{Krzysztof Choromanski}\footnotemark[1]$^{*\color{purple}{\textbf{4,5}}}$\vspace{0.3cm}\\ 
$^{\color{purple}{\textbf{1}}}$Seoul National University  $\ \ $ $^{\color{purple}{\textbf{2}}}$Independent  $\ \ $ $^{\color{purple}{\textbf{3}}}$Google Research  $\ \ $ \\$^{\color{purple}{\textbf{4}}}$Google DeepMind  $\ \ $ $^{\color{purple}{\textbf{5}}}$Columbia University
}

\begin{document}

\maketitle
\unmarkedfootnote{Correspondence to:
\href{mailto:arijit.sehanobish1@gmail.com}{arijit.sehanobish1@gmail.com},
\href{mailto:kchoro@google.com}{kchoro@google.com}.}
\vspace{-5mm}
\begin{abstract}
We present a new class of efficient attention mechanisms applying universal 3D Relative Positional Encoding (RPE) methods given by arbitrary integrable modulation functions $f$. They lead to the new class of 3D-Transformer models, called \textit{RelFlexformers}, flexibly integrating those RPEs, and characterized by the $O(L \log L)$ time complexity of the attention computation for the $L$-length input sequences. RelFlexformers builds on the theory of the Non-Uniform Fourier Transform (NU-FFT), naturally generalizing several existing efficient RPE-attention methods from structured settings with tokens homogeneously embedded in unweighted grids into general non-structured heterogeneous scenarios, where tokens' positions are arbitrarily distributed in the corresponding 3D spaces. As such, RelFlexformers can be applied in particular to model point clouds. Our extensive empirical evaluation on a large portfolio of 3D datasets confirms quality improvements provided by the NU-FFT-driven attention modulation techniques in the RelFlexformers.
\end{abstract}

\section{Introduction}
\label{sec:intro}

Attention-based Transformers \citep{vaswani17} have become dominant architectures for 3D learning, but their standard form is not aligned well with point cloud (PC) data modality and other geometric data. In these settings, the input tokens are unordered, irregularly spaced in Euclidean space, and often carry strong inductive bias through relative positions~\cite{lu2022transformers}. This has motivated a large family of relative positional encoding (RPE) methods, which inject geometry into attention and improve performance in 3D classification and segmentation~\cite{yang2022one}. At the same time, the quadratic cost of exact attention remains a major bottleneck, especially for large point sets and dense geometric models.

Efficient attention mechanisms, such as Performers~\cite{choromanski2020rethinking}, offer a compelling alternative by replacing the softmax kernel with linear kernels but in the nonlinearly transformed domains, reducing the cost of attention from quadratic to linear time. However, integrating geometric modulation into such models is not straightforward. Unlike standard attention, Performers do not materialize the attention matrix explicitly, so a geometric mask cannot simply be inserted as an element-wise correction. Any viable modulation must instead be compatible with the kernelized formulation and admit fast matrix-vector products~\cite{block-toeplitz}. This requirement is essential: without it, the geometric bias would destroy the computational advantage that makes the architecture useful in the first place.

In this work, we introduce \emph{RelFlexformers}, a class of efficient 3D Transformers that incorporate a broad family of RPEs through an arbitrary integrable modulation function $f$. Our key observation is that the resulting masked attention operator can still be applied efficiently, provided the modulation admits a fast subquadratic matrix-vector multiplication. We provide such an implementation and show that it can be realized in $O(L \log L)$ time for $L$-length sequences, using a \textit{Non-Uniform Fast Fourier Transform} (NU-FFT) formulation. This allows geometric information to be incorporated directly into Performer-style attention without reverting to explicit $L \times L$ attention computation.

RelFlexformers generalize several existing efficient RPE attention methods, typically designed for structured token layouts such as regular grids or homogeneous spatial lattices. In contrast, our formulation applies to arbitrary 3D token configurations, making it well suited to point clouds, depth images (with tokens embedded into 3D spaces, for instance via lifting with the average depth signal across pixels in the patch) and other irregular geometric inputs. The resulting framework is flexible: different modulation functions induce different geometric priors, while the underlying attention computation remains efficient.

We evaluate RelFlexformers on a range of 3D point cloud benchmarks for classification and segmentation. Across these tasks, the proposed NU-FFT-driven modulation consistently improves performance over strong Performer-based baselines and yields a practical route for combining geometric expressivity with subquadratic attention. RelFlexformer is closing the accuracy gap, as compared to the regular quadratic Transformer's attention, often outperforming it.

Our main contributions are: 
\begin{enumerate}
    \item  We formulate a general class of 3D RPE-modulated efficient attention mechanisms based on arbitrary $L_1$-integrable functions. We derive an $O(L \log L)$ implementation via NU-FFT, enabling fast masked attention without materializing the attention matrix (Sec.~\ref{sec:relflexformer}).

    \item We validate RelFlexformer as a versatile, drop-in replacement across diverse backbones (PCT, PTv3, DFormer), consistently elevating baseline Performer attention across synthetic models, sparse LiDAR, and multi-sensor RGB-D data. We also show that RelFlexformer not only bridges the performance gap inherent to efficient linear attention, but actually surpasses standard $O(L^2)$ Transformers on various datasets. Moreover, we show that our distance-based RPE modulation is orthogonal and highly complementary to token-level encodings like PointRoPE and can be combined together (Sec.~\ref{sec:experiments}).
    
\end{enumerate}

\section{Related Work}
\paragraph{Efficient Transformers and Kernel Attention.}
Standard Transformers scale quadratically, $O(L^2)$, with sequence length $L$ \cite{vaswani17}. To mitigate this, one line of research focuses on sparse or local attention \cite{kitaev2020reformer, beltagy2020longformer, zaheer2020bigbird}. Another approach uses kernel-based linear attention to rewrite the softmax operator, achieving $O(L)$ complexity \cite{katharopoulos2020}. Specifically, Performers \cite{choromanski2020rethinking} replace softmax kernels with linear (dot-product) kernels, but in new spaces, obtained via nonlinear transformations of queries and keys. Those mechanisms include in particular techniques applying randomized maps (via random features) to unbiasedly approximate original softmax kernel.
While efficient, these kernelized methods often struggle to integrate structured biases like RPEs without reverting to quadratic cost.
\paragraph{3D Point Cloud Transformers.}
Transformers are naturally suited for 3D point clouds due to their permutation equivariance. Architectures like PCTs~\cite{guo21} and Point Transformers~\cite{zhao2021point} (and their successors) have achieved state-of-the-art results by applying self-attention to local neighborhoods or global point sets. However, these models typically rely on $O(L^2)$ attention or local grouping heuristics to handle geometric relationships. While some models incorporate RPEs via Euclidean distances, they do not provide a mechanism for global, sub-quadratic attention that maintains a general integrable RPE mask. Separately, Point Transformer V3 (PTv3)~\citep{wu2024point} improves scalability by serializing point clouds with space-filling curves and applying dot-product attention within serialized patches. By replacing KNN-based neighborhood construction and explicit RPE with serialized neighbor mapping and sparse-convolutional conditional positional encoding, PTv3 achieves a much larger receptive field with improved speed and memory efficiency. Nevertheless, its efficiency comes from restructuring point clouds into serialization-induced local neighborhoods, rather than from a global sub-quadratic attention mechanism with a general integrable RPE mask. 
\paragraph{Relative Positional Encodings (RPE).}
RPE enhances Transformers by encoding the spatial relationship between tokens \cite{shaw2018self}. Recent advances include ALiBi \cite{alibi}, RoPE \cite{rope} and Fourier Positional Encoding~\cite{fope}, which improve extrapolation. However, most RPE schemes assume a 1D sequence or 2D and 3D grid structures~\cite{schenck2025learning}. In 3D point clouds, the RPE should ideally depend on the continuous distance $f(\mathbf{x}_i - \mathbf{x}_j)$. Luo et al. \cite{luo21} demonstrated that 1D RPE can be computed in $O(L \log L)$ via FFT by exploiting Toeplitz structures. ~\cite{block-toeplitz} provides the first framework to incorporate arbitrary RPE masks in the Performer-style attention, provided the \textit{mask matrix supports fast (sub-quadratic) matrix vector multiplication}. This idea was utilized in~\cite{choromanskifast}, where relative positional encodings (RPE) are implemented using a tree-based mask. The original graph is approximated by a spanning tree, allowing the mask to be applied efficiently via fast matrix–vector multiplication, and thereby integrated into the Performer architecture.~\cite{reid2025linear} uses the graph random features (GRF) to provide inductive biases to linear transformers. Even though the above mentioned masks can be incorporated in Performers, they incur $O(L^2)$ pre-processing time to design the masks. Our work does not require the construction of a KNN graph for point clouds and generalizes the Toeplitz based masks on structured grids and thus runs in subquadratic time. Since point clouds are non-equispaced, we utilize Non-Uniform Fast Fourier Transform (NU-FFT) \cite{greengard2004accelerating, doumeche25} to enable arbitrary 3D RPE masks within a Performer framework, achieving $O(L \log L)$ complexity for irregular geometries. Crucially, this formulation provides a unified geometric operator that subsumes recent multidimensional encodings like STRING \cite{schenck2025learning} as special cases, while extending the same sub-quadratic efficiency to dense RGB-D data via coordinate lifting. We provide a detailed discussion on the evolution of RGB-D representations and their intersection with our unified RPE framework in App. \ref{app:related_works}.

\section{RelFlexformer}
\label{sec:relflexformer}
The RelFlexformer attention mechanism relies on the theory of the Non-Uniform Fast Fourier Transform (NU-FFT). We provide a basic introduction to Fourier analysis in Sec. \ref{sec:preliminaries} and show how it is applied in RelFlexformer in Sec. \ref{sec:relflexformer-attention}.
\subsection{Preliminaries}
\label{sec:preliminaries}
The Fourier Transform (FT) and Inverse Fourier Transform (IFT) of the $L_{1}$-integrable functions $f,g:\mathbb{R}^{r} \rightarrow \mathbb{R}$ are defined as follows, for the imaginary unit $i \in \mathbb{C}$ ($i^{2}=-1$):
\begin{equation}
\mathcal{F}_{f}(\xi)  = \int_{\mathbb{R}^{r}}\exp(-2\pi i \mathbf{x}^{\top}\xi)f(\mathbf{x})d\mathbf{x}, \textrm{   }
\mathcal{F}^{-1}_{g}(\mathbf{x})  = \int_{\mathbb{R}^{r}}\exp(2\pi i \mathbf{x}^{\top}\xi)g(\xi)d\xi 
\end{equation}
The \textit{Fourier Inversion Theorem} \citep{FT} states that taking the Inverse Fourier Transform of the Fourier Transform of a given function $f$, or vice versa, results in the function $f$ itself:
\begin{equation}
\mathcal{F}^{-1}_{\mathcal{F} f} = \mathcal{F}_{\mathcal{F}^{-1}f} = f.    
\end{equation}

\begin{theorem} [Convolution Theorem \citep{FT}]
Take two functions $g, h: \mathbb{R}^{r} \rightarrow \mathbb{R}$ and their convolution:
\begin{equation}
p(\mathbf{z}) = (g \star h)(\mathbf{z}) \overset{\mathrm{def}}{=} \int_{\mathbb{R}^{r}} g(\tau)h(\mathbf{z}-\tau)d\tau    
\end{equation}
Then the following holds:
\begin{equation}
\mathcal{F}_{p} = \mathcal{F}_{g} \mathcal{F}_{h} \textrm{;   } \mathcal{F}^{-1}_{p} = \mathcal{F}^{-1}_{g} \mathcal{F}^{-1}_{h}   
\end{equation}
\end{theorem}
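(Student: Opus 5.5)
Take $g,h\in L_1(\mathbb{R}^r)$, the setting of the Preliminaries; this hypothesis is implicit in the statement and is what makes every step below legitimate. The plan is to write $\mathcal{F}_p(\xi)$ as an iterated integral, exchange the order of integration by Fubini's theorem, and factor the exponential. The inverse-transform identity will then follow from the same computation with the sign of the exponent flipped.

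\emph{Step 1: $p$ is well defined and integrable.} Consider the function $(\tau,\mathbf{z})\mapsto g(\tau)h(\mathbf{z}-\tau)$ on $\mathbb{R}^r\times\mathbb{R}^r$. By Tonelli's theorem and translation invariance of Lebesgue measure,
\begin{equation*}
\int\!\!\int |g(\tau)||h(\mathbf{z}-\tau)|\,d\mathbf{z}\,d\tau=\|g\|_1\|h\|_1<\infty .
\end{equation*}
So the function is absolutely integrable. Hence $p(\mathbf{z})$ is finite for almost every $\mathbf{z}$, and $\|p\|_1\le\|g\|_1\|h\|_1$. In particular $\mathcal{F}_p$ is defined.

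\emph{Step 2: the forward identity.} For fixed $\xi$, multiplying the integrand by $\exp(-2\pi i\mathbf{z}^\top\xi)$, which has modulus one, keeps it absolutely integrable. Fubini's theorem therefore lets us write
\begin{equation*}
\mathcal{F}_p(\xi)=\int_{\mathbb{R}^r} g(\tau)\left(\int_{\mathbb{R}^r} \exp(-2\pi i\mathbf{z}^\top\xi)\,h(\mathbf{z}-\tau)\,d\mathbf{z}\right)d\tau .
\end{equation*}
Substitute $\mathbf{u}=\mathbf{z}-\tau$ and split $\exp(-2\pi i\mathbf{z}^\top\xi)=\exp(-2\pi i\tau^\top\xi)\exp(-2\pi i\mathbf{u}^\top\xi)$. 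The inner integral then becomes $\exp(-2\pi i\tau^\top\xi)\mathcal{F}_h(\xi)$. Pulling $\mathcal{F}_h(\xi)$ out of the outer integral leaves exactly $\mathcal{F}_g(\xi)$, which gives $\mathcal{F}_p=\mathcal{F}_g\mathcal{F}_h$.

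\emph{Step 3: the inverse identity.} Repeat Step 2 verbatim with kernel $\exp(+2\pi i\mathbf{z}^\top\xi)$ in place of $\exp(-2\pi i\mathbf{z}^\top\xi)$. The factorization of the exponential is sign-independent, so this yields $\mathcal{F}^{-1}_p=\mathcal{F}^{-1}_g\mathcal{F}^{-1}_h$. Here $\mathcal{F}^{-1}$ is read as the integral operator applied to a function on $\mathbb{R}^r$, which is well defined since all three functions are in $L_1$.

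\emph{Main obstacle.} The algebra is routine; the only real step is justifying the interchange of integrals in Step 2. That justification rests entirely on the Tonelli bound from Step 1. If the statement is meant for non-integrable $g,h$, it would have to be reinterpreted via tempered distributions, but I would state and prove it under the $L_1$ hypothesis.
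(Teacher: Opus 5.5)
The paper gives no proof of its own here---it quotes the Convolution Theorem as a standard fact from its Fourier-analysis reference. Your argument (Tonelli to put $g(\tau)h(\mathbf{z}-\tau)$ in $L_1(\mathbb{R}^{2r})$, Fubini with the substitution $\mathbf{u}=\mathbf{z}-\tau$, then the sign flip for $\mathcal{F}^{-1}$) is exactly the standard textbook proof and is correct under the implicit $L_1$ hypothesis. One small refinement: for the paper's later use with $P=\sum_{l} c_l\delta_{\mathbf{r}_l}$ you do not need tempered distributions, since the same Fubini computation works verbatim against a finite signed measure (or directly, as $f\star P=\sum_l c_l f(\cdot-\mathbf{r}_l)$ and translation multiplies $\mathcal{F}_f$ by $\exp(-2\pi i\xi^{\top}\mathbf{r}_l)$).
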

The FT or IFTs of the convolution of two functions is the product of the FTs or IFTs of those functions. Thus convolving in the original space corresponds to the multiplication in the Fourier space.
Fourier analysis can be extended to objects that are not valid functions, but the so-called \textit{distributions}. One of the most standard examples is a Dirac delta function $\delta_{\mathbf{z}_{0}}$, defined as follows for $\mathbf{z} \in \mathbb{R}^{r}$: 
\begin{equation}
\delta_{\mathbf{z}_{0}}(\mathbf{z}) =
\begin{cases}
0 & \text{if } \mathbf{z} \neq \mathbf{z}_{0} \\
+\infty & \text{if } \mathbf{z} = \mathbf{z}_{0}
\end{cases}
\end{equation}
It can be shown that $\mathcal{F}_{\delta_{\mathbf{z}_{0}}}(\xi) = \exp(-2 \pi i \xi^{\top}\mathbf{z}_{0})$ and $\mathcal{F}^{-1}_{\delta_{\mathbf{z}_{0}}}(\xi) = \exp(2 \pi i \xi^{\top}\mathbf{z}_{0})$.

\begin{definition}[Non-Uniform Fast Fourier Transform (NU-FFT)]
For $a,b \in \mathbb{N}$, two sequences of vectors in $\mathbb{R}^{d}$: $(\alpha_{1},...,\alpha_{a}),(\beta_{1},...,\beta_{b})$ and a sequence of scalars $(c_{1},...,c_{b})$,
the Non-Uniform Fast Fourier Transform is a transformation computing a sequence of scalars: $(\gamma_{1},...,\gamma_{a})$ defined as: 
\begin{equation}
\gamma_{k} = \sum_{j=1}^{b} c_{j} \exp(2 \pi i \alpha_{k}^{\top}\beta_{j})
\end{equation}
\end{definition}
The Non-Uniform FFT can be computed in time $O((a+b)\log(a+b))$.

\textit{Remark :} The above calculations boil down to the multidimensional Discrete Fourier Transform, or DFT (for $d = 1$, the usual DFT) when the vectors $\alpha$ and $\beta$ form a Cartesian grid, i.e. $\beta_j$  must be of the form $\left( \frac{j_1}{N_1}, \frac{j_2}{N_2}, \dots, \frac{j_d}{N_d} \right)$, where each $j_i$ is an integer, and $\alpha_k$ are integer vectors $(k_1, k_2, \dots, k_d)$. Those can be computed efficiently with the use of the standard Fast Fourier Transform (FFT; \citep{FT}).

\subsection{NU-FFT for RPE-modulated attention}
\label{sec:relflexformer-attention}
In this section, we introduce our RPE-attention mechanism, starting from the definition of masked attention. The mask encodes how the RPE mechanism is modulating regular attention computations.

Denote by $\{\mathbf{x}_{i}\}_{i=1}^{L} \subseteq \mathbb{R}^{d}$ the embeddings of input tokens to the attention mechanism and by $\mathbf{X} \in \mathbb{R}^{L \times d}$ a matrix of rows given by $\{\mathbf{x}_{i}\}_{i=1}^{L}$.
We define by $\{\mathbf{q}_{i}\}_{i=1}^{L},\{\mathbf{k}_{i}\}_{i=1}^{L} \subseteq \mathbb{R}^{d_{QK}}, \{\mathbf{v}_{i}\}_{i=1}^{L} \subseteq \mathbb{R}^{d}$ the sets of queries, keys and values respectively, obtained by applying three learnable linear transformations: $\mathbf{W}_{Q} \in \mathbb{R}^{d_{QK} \times d} ,\mathbf{W}_{K}^{d_{QK} \times d},\mathbf{W}_{V} \in \mathbb{R}^{d \times d}$ to embeddings $\{\mathbf{x}_{i}\}_{i=1}^{L}$. 

\begin{tcolorbox}[colback=white, colframe=black, boxrule=0.5pt, arc=2pt, title=\textbf{Algorithm 1: $\mathrm{FastMult}_{\mathbf{M}}$: NU-FFT Mask Multiplication}]
\textbf{Input:} Input vector $\mathbf{u} \in \mathbb{R}^{L}$, token coordinates $\{\mathbf{r}_{i}\}_{i=1}^{L} \in \mathbb{R}^{d}$, spatial modulation function $f$, quadrature samples $\{\xi_{s}\}_{s=1}^{S}$ and coefficients $\{a_{s}\}_{s=1}^{S}$. \\[0.5em]
\textbf{Output:} Vector $\mathbf{w} \in \mathbb{R}^{L}$ approximating $\mathbf{Mu}$.

\vspace{0.2cm}
\begin{enumerate}[label=\textbf{\arabic*.}, leftmargin=*, itemsep=0.15cm]
    \item Compute the Fourier Transform of the point cloud signal at sampled frequencies $s \in \{1, \dots, S\}$ using the forward Non-Uniform FFT in $O(L \log L)$ time: \\
    \hspace*{1.5em} $\mathcal{F}_{P}(\xi_{s}) = \sum_{l=1}^{L} u_{l} \exp(-2\pi i\xi_{s}^{\top} \mathbf{r}_{l})$.
    
    \item Calculate the modulated coefficients for each frequency sample using the spatial mask's Fourier Transform: \\
    \hspace*{1.5em} $b_{s} = a_{s} \mathcal{F}_{f}(\xi_{s}) \mathcal{F}_{P}(\xi_{s})$.
    
    \item Output the final evaluated function at the point coordinates $i \in \{1, \dots, L\}$ using the inverse Non-Uniform FFT in $O(L \log L)$ time: \\
    \hspace*{1.5em} $\mathbf{w}_{i} = \sum_{s=1}^{S} b_{s} \exp(2\pi i\xi_{s}^{\top} \mathbf{r}_{i})$.
\end{enumerate}
\end{tcolorbox}

The unnormalized attention matrix
$\mathbf{A} \in \mathbb{R}^{L \times L}$ is given as: $\mathbf{A} = [\exp(\mathbf{q}_{i}^{\top}\mathbf{k}_{j})]_{i,j \in \{1,...,L\}} \in \mathbb{R}^{L \times L}$. In the regular Relative Positional Encoding (RPE) mechanism, this matrix is then modulated by the so-called \textit{mask matrix} $\mathbf{M} = [f(\mathbf{r}_{i}-\mathbf{r}_{j})]_{i,j \in \{1,...,L\}} \in \mathbb{R}^{L \times L}$, where the set $\{\mathbf{r}_{i}\}_{i=1}^{L} \in \mathbb{R}^{d}$ encodes tokens' positions in some geometric space (e.g. 3D-space for point cloud applications), $f$ is called the \textit{modulation function} and the modulation itself is given via a Hadamard (element-wise) product: $\overline{\mathbf{A}} = \mathbf{A} \odot \mathbf{M}$. Matrix $\overline{\mathbf{A}}$ is then row-wise normalized and the resulting matrix $\overline{\mathbf{A}}_{\textrm{norm}} = [\frac{\overline{\mathbf{A}}_{i,j}}{\sum_{k=1}^{K} \overline{\mathbf{A}}_{i,k}}]$ is applied to the value matrix $\mathbf{V} \in \mathbb{R}^{L \times d_{V}}$, of rows given by $\{\mathbf{v}_{i}\}_{i=1}^{L}$. The final result of attention is given as follows, where the columns of the output matrix are interpreted as new embeddings:
\begin{equation}
\mathbf{X} \rightarrow \mathbf{X} + \overline{\mathbf{A}}_{\textrm{norm}}\mathbf{V}
\end{equation}
Note that the brute-force application of that mechanism requires explicit materialization of the mask matrix $\mathbf{M} \in \mathbb{R}^{L \times L}$, which already leads to quadratic time $O(L^{2})$. However, as demonstrated in \citep{block-toeplitz}, if regular attention matrix $\mathbf{A}$ is replaced by its lower-rank counterpart: $\tilde{\mathbf{A}} = [\phi(\mathbf{q}_{i})^{\top}\phi(\mathbf{k}_{j})]_{i=1,...,L}^{j=1,...,L}$ for some feature map $\phi:\mathbb{R}^{d} \rightarrow \mathbb{R}^{m}$, as in Performer-Transformer models \citep{choromanski2020rethinking}, then all the computations can be conducted in time $O(L+T(\mathbf{M}))$, where $T(\mathbf{M})$ stands for the time complexity of the matrix-vector multiplications with mask matrices $\mathbf{M}$. Linear-attention Performer-models belong to the class of efficient attention techniques, capable of unbiasedly approximating regular attention matrices $\mathbf{A}$ with \textit{positive random feature} maps $\phi$. 
Thus designing an efficient attention method supporting RPEs reduces to designing an efficient algorithm for computing $\mathbf{w} = \mathbf{M}\mathbf{u}$ for any $\mathbf{u} \in \mathbb{R}^{L}$. We will show an approximate algorithm achieving this in time $O(L \log L)$ with the use of NU-FFT.

Take an arbitrary $\mathbf{u} \in \mathbb{R}^{L}$ and $\mathbf{w} = \mathbf{Mu}$. Note that we can re-write $\mathbf{w}_{i}$ for $i=1,...,L$ as:
\begin{equation}
\mathbf{w}_{i} = w(\mathbf{r}_{i}), \textrm{where  } 
w(\mathbf{z}) \overset{\textrm{def}}{=} \int_{\mathbb{R}^{r}} f(\mathbf{z}-\mathbf{k})P(\mathbf{k})d\mathbf{k},
\textrm{   } P(\mathbf{k}) \overset{\textrm{def}}{=} \sum_{l=1}^{L} \mathbf{u}_{l}\delta(\mathbf{k}-\mathbf{r}_{l})
\end{equation}
and $\delta$ is a Dirac delta function. Thus, $w$ as a function of $\mathbf{z}$ can be written as a convolution: $w = f \star P$. 
From Fourier Inversion Theorem and Convolution Theorem (see: Sec. \ref{sec:preliminaries}), we thus get the following:
\begin{equation}
w(\mathbf{z}) = \int_{\mathbb{R}^{r}} \mathcal{F}_{w}(\xi)\exp(2 \pi i \xi^{\top}\mathbf{z})d\xi  
= \int_{\mathbb{R}^{r}} \mathcal{F}_{f}(\xi)\mathcal{F}_{P}(\xi)\exp(2 \pi i \xi^{\top}\mathbf{z})d\xi 
\end{equation}
Using any of the quadrature methods, we then approximate $w$ as:
\begin{equation}
\label{eq:big_sum}
w(\mathbf{z}) \approx \tilde{w}(\mathbf{z}) = \sum_{s=1}^{S} a_{s}\mathcal{F}_{f}(\xi_{s})\mathcal{F}_{P}(\xi_{s})\exp(2\pi i \xi_{s}^{\top}\mathbf{z})     
\end{equation}
for the quadrature coefficients $\{a_{s}\}_{s=1}^{S}$ and samples $\{\xi_{s}\}_{s=1}^{S}$. We will sample $S=O(L)$ points (in practical applications we will even take $S$ to be a constant).
Note that from the linearity of the FT and our observations on the FT of the Dirac delta function from Sec. \ref{sec:preliminaries}, we get:
\begin{equation}
\mathcal{F}_{P}(\xi_{s}) = \sum_{l=1}^{L} \mathbf{u}_{l}\exp(-2\pi i \mathbf{\xi}_{s}^{\top}\mathbf{r}_{l}) \textrm{;    } s=1,...,S.    
\end{equation}
Thus we can compute all $\mathcal{F}_{P}(\xi_{s})$ for $s=1,...,S$ in $O(L\log(L))$ time, applying NU-FFT.
With all those values efficiently calculated, we can define: $b_{s} = a_{s}\mathcal{F}_{f}(\xi_{s})\mathcal{F}_{P}(\xi_{s})$.
Then Equation \ref{eq:big_sum} becomes:
\begin{equation}
w(\mathbf{z}) \approx \tilde{w}(\mathbf{z}) = \sum_{s=1}^{S}b_{s} \exp(2 \pi i \xi_{s}^{\top}\mathbf{z})    
\end{equation}
We need to compute its value in $L$ points: $\mathbf{z}=\mathbf{r}_{1},...,\mathbf{r}_{L}$. We do it again with NU-FFT, again in $O(L \log(L))$ time. That completes the algorithm. Let us call this algorithm  $\textrm{FastMult}_{\mathbf{M}}$.

\textbf{Note:} In practical applications, quadrature coefficients $\{a_{s}\}_{s=1}^{S}$ and samples $\{\xi_{s}\}_{s=1}^{S}$ can also be made learnable. That effectively corresponds to implicitly learning the modulation function $f$.

Now we are ready to integrate this RPE mask $\mathbf{M}$ into Performer-style attention. Recall the low rank attention is given as follows, where $\phi:\mathbb{R}^{d} \rightarrow \mathbb{R}^{m}$ is a nonlinear (potentially randomized) map:
\begin{align}
\begin{split}
    \widehat{\mathrm{Att}_\mathrm{K}} (\mathbf{Q}, \mathbf{K}, \mathbf{V}) = \widehat{\mathbf{D}}^{-1} (\phi(\mathbf{Q})(\phi(\mathbf{K})^{\top} \mathbf{V})), \\
    \quad \widehat{\mathbf{D}} = \mathrm{diag} (\phi(\mathbf{Q})(\phi(\mathbf{K})^{\top} \mathbf{1}_L) ). \label{performers_attention}
\end{split}    
\end{align} 
Here $\mathbf{1}_L$ is the all-ones vector of length $L$, $\mathrm{diag} (\cdot)$ is a diagonal matrix with the input vector as the diagonal
and $\mathbf{Q} \in \mathbb{R}^{L \times d_{QK}}$, $\mathbf{K} \in \mathbb{R}^{L \times d_{QK}}$ and $\mathbf{V} \in \mathbb{R}^{L \times d}$ are the matrices with rows given by the sequences of queries, keys and value vectors respectively.

\citep{block-toeplitz}  investigated how to incorporate the masking in the linear attention as above. Note that in this case $\tilde{\mathbf{A}}$ is never materialized. Building on the work of~\citep{luo21}, the authors of ~\citep{block-toeplitz} proposed a general algorithm that allows for the efficient implementation of masked linear attention. For the convenience of the reader, we present this algorithm in Algorithm 2 box below. We design a mask $\mathbf{M}$ that supports fast matrix vector multiplication, which is exactly the presented above $\textrm{FastMult}_{\mathbf{M}}$ algorithm. We summarize it in the Algorithm 1 box below.

\vspace{5mm}

\begin{tcolorbox}[colback=white, colframe=black, boxrule=0.5pt, arc=2pt, title=\textbf{Algorithm 2: General Efficient Low-Rank Masked Attention} \label{alg:mask_att}]
\textbf{Input:} Query/key matrices: $\mathbf{Q},\mathbf{K} \in \mathbb{R}^{L \times d_{QK}}$, value matrix $\mathbf{V} \in \mathbb{R}^{L \times d}$, mask $\mathbf{M} \in \mathbb{R}^{L \times L}$, procedure $\mathrm{FastMult}_{\mathbf{M}}:\mathbb{R}^{L} \rightarrow \mathbb{R}^{L}$ calculating $\mathbf{Mx}$ (or its approximation) for the input $\mathbf{x} \in \mathbb{R}^{L}$, kernel feature map: $\phi:\mathbb{R}^{d_{QK}} \rightarrow \mathbb{R}^{m}$. $\mathrm{vec}(\cdot)$ denotes vectorization. \\[0.5em]
\textbf{Output:} Masked low-rank attention embeddings using $\phi$.

\vspace{0.2cm}
\begin{enumerate}[label=\textbf{\arabic*.}, leftmargin=*, itemsep=0.15cm]
    \item Compute matrices $\mathbf{V}^{1} \in \mathbb{R}^{L \times (md)}$, $\mathbf{V}^{2} \in \mathbb{R}^{L \times m}$ with rows defined as: \\
    \hspace*{1.5em} $\mathbf{V}^{1}_{i:}=\mathrm{vec}(\phi(\mathbf{k}_{i}^{\top})\mathbf{v}_{i})$, $\mathbf{V}^{2}_{i:}=\phi(\mathbf{k}_{i}^{\top})^{\top}$, where $\mathbf{k}_{i}$/$\mathbf{v}_{i}$ stands for the ith row of $\mathbf{K}$/$\mathbf{V}$.
    
    \item $\tilde{\mathbf{D}}^{1} :=[{\color{blue}\mathrm{FastMult}_{\mathbf{M}}(\mathbf{V}^{1}_{:1}),...,\mathrm{FastMult}_{\mathbf{M}}(\mathbf{V}^{1}_{:md})}] \in \mathbb{R}^{L \times md}$, \\
    \hspace*{1.5em} $\tilde{\mathbf{D}}^{2} := [{\color{blue}\mathrm{FastMult}_{\mathbf{M}}(\mathbf{V}^{2}_{:1}),...,\mathrm{FastMult}_{\mathbf{M}}(\mathbf{V}^{2}_{:m})}] \in \mathbb{R}^{L \times m}$ for $\mathbf{V}^{1/2}_{:i}$ denoting ith column of $\mathbf{V}^{1/2}$.
    
    \item Output the embedding $\mathbf{p}_{i}$ of the ith tokens as: \\
    \hspace*{1.5em} $\mathbf{p}_{i} = \frac{\phi(\mathbf{q}_{i}^{\top})^{\top}\mathrm{devec}(\tilde{\mathbf{D}}^{1}_{i:})}{\phi(\mathbf{q}_{i}^{\top})^{\top}(\tilde{\mathbf{D}}^{2}_{i:})^{\top}}$, where $\mathbf{q}_{i}$ is the ith row of $\mathbf{Q}$ and $\mathrm{devec}(\cdot)$ devectorizes its input back to $\mathbb{R}^{m \times d}$.
\end{enumerate}
\end{tcolorbox}

\begin{figure}
    \centering
    \includegraphics[width=\linewidth]{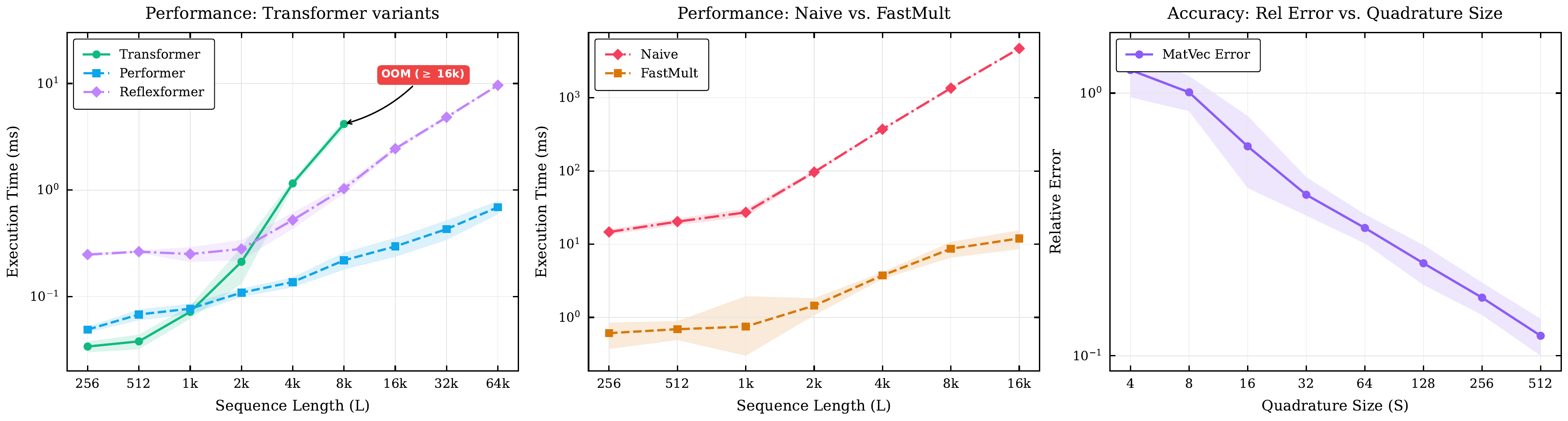}
    \caption{Execution time (ms) as a function of sequence length ($L$) and accuracy of our FastMult algorithm. \textbf{Left:} Performance comparison of Transformer, Performer, and RelFlexformer models. Note that the standard Transformer encounters an Out-of-Memory (OOM) error for sequence lengths $L \geq 16k$. \textbf{Middle:} Execution time comparison of the Naive approach versus FastMult. \textbf{Right:} Relative error of stochastic quadrature approximation to exact mask matrix-vector product as a function of the quadrature size $S$. Error decays as $O(1/\sqrt{S})$.}
    \label{fig:time_plot}
\end{figure}

\
\begin{lemma}[Generalization to RoPE]
For 1D sequential token coordinates, if the continuous spatial modulation function $f(z)$ is parameterized as a set of discrete harmonic functions, the NU-FFT spatial convolution perfectly recovers the discrete rotational shifts of standard Rotary Position Embedding (RoPE).
\end{lemma}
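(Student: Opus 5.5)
We take positions $r_j = j$ for $j=1,\dots,L$ and a modulation function that is a finite sum of harmonics,
\[
f(z) \;=\; \sum_{s=1}^{S} c_s \exp(2\pi i \theta_s z), \qquad \theta_s = \omega_s/(2\pi),
\]
where the $\omega_s$ are the RoPE frequencies, $\omega_s = 10000^{-2s/d_{QK}}$ with $S = d_{QK}/2$.

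This $f$ is not $L_1$-integrable. We therefore interpret its Fourier transform as a distribution, in the same way the paper handles the Dirac delta in Sec.~\ref{sec:preliminaries}. By the quoted identity $\mathcal{F}^{-1}_{\delta_{\mathbf{z}_0}}(\xi)=\exp(2\pi i \xi^\top \mathbf{z}_0)$ and linearity,
\[
\mathcal{F}_f \;=\; \sum_s c_s\, \delta_{\theta_s}.
\]
The spectrum of $f$ is thus supported on exactly $S$ atoms. The first key step follows from this: the quadrature in Eq.~\eqref{eq:big_sum} is exact, not an approximation, if we take the samples $\xi_s=\theta_s$ and weights $a_s\mathcal{F}_f(\xi_s)$ replaced by $c_s$. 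Integrating $\mathcal{F}_f\mathcal{F}_P e^{2\pi i\xi z}$ against a sum of deltas collapses to the finite sum, so
\[
\tilde w(z) = w(z) = \sum_s c_s\, \mathcal{F}_P(\theta_s)\, e^{2\pi i \theta_s z}.
\]
Consequently $\mathrm{FastMult}_{\mathbf{M}}$ computes $\mathbf{Mu}$ exactly.

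The second step expands this expression. Substituting $\mathcal{F}_P(\theta_s)=\sum_l u_l e^{-2\pi i\theta_s l}$ and evaluating at $z=j$ gives
\[
w_j \;=\; \sum_l u_l \sum_s c_s\, e^{i\omega_s (j-l)}.
\]
So $\mathbf{M}_{jl}=\sum_s c_s e^{i\omega_s(j-l)}$, and $\mathbf{M}$ is a Toeplitz matrix whose entries are sums of the rotation phases.

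The third step matches this against RoPE. Write each pair of query/key coordinates as a complex number $\tilde q^{(s)}, \tilde k^{(s)}$. RoPE's score is then
\[
\mathrm{Re}\sum_s \tilde q^{(s)}_j\, \overline{\tilde k^{(s)}_l}\, e^{i\omega_s(j-l)}.
\]
We identify this per-frequency-channel: in channel $s$ we take the mask $\mathbf{M}^{(s)}_{jl}=e^{i\omega_s(j-l)}$, which is a single harmonic $f_s$ with $c_s=1$, together with the rank-one (linear) features $\phi(q)=\tilde q^{(s)}$ and $\phi(k)=\tilde k^{(s)}$. Algorithm 2 then yields
\[
\sum_l \tilde q_j^{(s)}\, \overline{\tilde k_l^{(s)}}\, \mathbf{M}^{(s)}_{jl}\, v_l,
\]
and summing over $s$ and taking real parts reproduces exactly RoPE's rotated score $\mathbf{q}_j^\top R_{\omega(j-l)}\mathbf{k}_l$. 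Equivalently, since $R_{\omega_s(j-l)}=R_{\omega_s j}^\top R_{\omega_s l}$, the mask factorizes as a rank-one phase outer product. This makes explicit that the ``discrete rotational shifts'' are precisely the diagonal phase multiplications $e^{-2\pi i\theta_s r_l}$ in step 1 and $e^{2\pi i\theta_s r_i}$ in step 3 of Algorithm 1.

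The main obstacle is making the statement precise. There are two gaps. First, the Hadamard-mask formulation uses a single scalar $f$, whereas RoPE couples each frequency to its own coordinate pair. Handling this requires the per-channel (block-diagonal) mask described above, and we must also state that complex or real-part conventions are used. Second, $f$ leaves the $L_1$ setting, so the convolution and inversion theorems must be invoked in the distributional sense; we justify this by the delta-function remark in Sec.~\ref{sec:preliminaries}. We would present the lemma as an exact-recovery statement under this multi-channel, distributional interpretation. We would also note that no NU-FFT approximation error arises, because the nodes lie exactly on the spectral atoms.
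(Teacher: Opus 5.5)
Your proposal is correct. Its first two steps use the same mechanism as the paper: the Fourier transform of a harmonic is a Dirac atom, so placing the quadrature nodes of Algorithm~1 on the atoms collapses the integral to a finite sum and makes $\mathrm{FastMult}_{\mathbf{M}}$ exact. You differ in the choice of harmonics and in how far you carry the identification with RoPE.

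The paper uses a real $f(z)=\cos(\theta_d z)$ per feature dimension, which gives two atoms at $\pm\theta_d/2\pi$. It shows that $w(z)=\sum_l u_l\cos(\theta_d(z-r_l))$ is computed exactly, and then asserts that this is the RoPE embedding. You instead use one complex exponential per frequency and write out the Toeplitz mask $\mathbf{M}_{jl}=\sum_s c_s e^{i\omega_s(j-l)}$. You then actually match RoPE's score $\mathrm{Re}\sum_s \tilde q^{(s)}_j\overline{\tilde k^{(s)}_l}e^{i\omega_s(j-l)}$ through per-channel masks and rank-one complex features.

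This buys you the step the paper leaves unproved. For a single coordinate pair, the RoPE score is $(q_1k_1+q_2k_2)\cos(\theta(j-l))$ plus a cross term proportional to $(q_1k_2-q_2k_1)\sin(\theta(j-l))$. A real cosine mask applied coordinate-wise by Hadamard product produces only the first part. As you note, recovering the second part genuinely requires a complex (or extra sine) harmonic together with a frequency-to-channel coupling, i.e.\ a mask that depends on the feature coordinate rather than the single $\mathbf{M}$ of Algorithm~2.

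The price is that you leave the real-valued $L_1$ framework and must fix a real-part convention. The paper stays real but derives less than it claims. Your replacement of $a_s\mathcal{F}_f(\xi_s)$, which is not a finite number at an atom, by $c_s$ is also cleaner than the paper's unstated choice of weights.

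A few small fixes for the write-up:
\begin{itemize}
\item Algorithm~2 pairs features bilinearly via $\phi(\mathbf{q})^{\top}\phi(\mathbf{k})$, so take $\phi(\mathbf{k})=\overline{\tilde k^{(s)}}$ to obtain $\tilde q\,\overline{\tilde k}$.
\item Sum numerators and denominators over channels before dividing, rather than adding normalized per-channel outputs.
\item Since $R_{\omega j}^{\top}R_{\omega l}=R_{\omega(l-j)}$, your factorization identity has a sign slip. It is harmless and purely a matter of convention.
\end{itemize}
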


\begin{proof}
Recall that the application of a relative positional mask is computed as a spatial convolution in the frequency domain. For a target point $z$, the masked value $w(z)$ is:
$$ w(z) = \int_{\mathbb{R}} \mathcal{F}_f(\xi)\mathcal{F}_P(\xi)\exp(2\pi i\xi z)d\xi,$$
where $\mathcal{F}_P(\xi) = \sum_{l=1}^L u_l \exp(-2\pi i\xi r_l)$ is the Fourier Transform of the point cloud signal.

In standard RoPE, the dot product between queries and keys for a specific feature dimension $d$ is modulated by a rotation matrix with base frequency $\theta_d$. To replicate this, we define the modulation function for a single feature dimension as $f(z) = \cos(\theta_d z)$. 

Taking the continuous Fourier Transform of this modulation function yields Dirac deltas at the fundamental frequencies:
$$ \mathcal{F}_f(\xi) = \frac{1}{2} \left( \delta\left(\xi - \frac{\theta_d}{2\pi}\right) + \delta\left(\xi + \frac{\theta_d}{2\pi}\right) \right) $$

Substituting this into the convolution integral, by the properties of the Dirac delta, the integral evaluates to zero everywhere except exactly at $\pm \frac{\theta_d}{2\pi}$. The continuous integral thus collapses analytically into a discrete sum:
$$ w(z) = \frac{1}{2} \mathcal{F}_P\left(\frac{\theta_d}{2\pi}\right)\exp(i \theta_d z) + \frac{1}{2} \mathcal{F}_P\left(-\frac{\theta_d}{2\pi}\right)\exp(-i \theta_d z) $$

Consequently, by fixing the quadrature sampling frequencies in Algorithm 1 to $\xi_s \in \left\{\pm \frac{\theta_d}{2\pi}\right\}$, the generalized RelFlexformer algorithm bypasses approximation and analytically evaluates the exact RoPE embedding, all without explicit materialization of the attention matrix.
\end{proof}

\paragraph{Remark (Generalization to STRING~\citep{schenck2025learning}).}
The $\otimes$-STRING position encoding computes a multiplicative mask
$\frac{1}{m}\sum_{k=1}^{m}\cos(\boldsymbol{\omega}_k^\top(\mathbf{r}_i - \mathbf{r}_j))$ with \emph{learnable} frequency vectors $\{\boldsymbol{\omega}_k\}_{k=1}^m$. Our Algorithm~1 computes the same cosine-of-projection form, by drawing $\{\boldsymbol{\omega}_k\}$ from the spectral density $\exp(-\lambda\|\mathbf{x}\|)$
(multivariate Cauchy) and keeps them \emph{fixed}.
Setting $\mathcal{F}_f(\xi_s) = 1$ and replacing the quadrature weights $\{a_s\}$ with uniform $1/S$ recovers $\otimes$-STRING with random frequencies. Conversely, making our frequencies learnable recovers $\otimes$-STRING. Thus STRING is a special case of our framework.

Fig.~\ref{fig:time_plot} (left) illustrates the execution time of RelFlexFormer. Compared to a dense Transformer, our approach demonstrates improved scaling behavior, enabled by Alg.~1 (middle). Moreover we show that as the quadrature size increases, our FastMult can accurately approximate the action of the mask matrix. Finally, Fig.~\ref{fig:placeholder} illustrates the behavior of our mask matrix. The mask exhibits a decay pattern similar to standard RBF and Laplace kernels, and its shape can be controlled via different $L_1$ functions and spatial scales.

\begin{figure}
    \centering
    \includegraphics[width=0.75\linewidth]{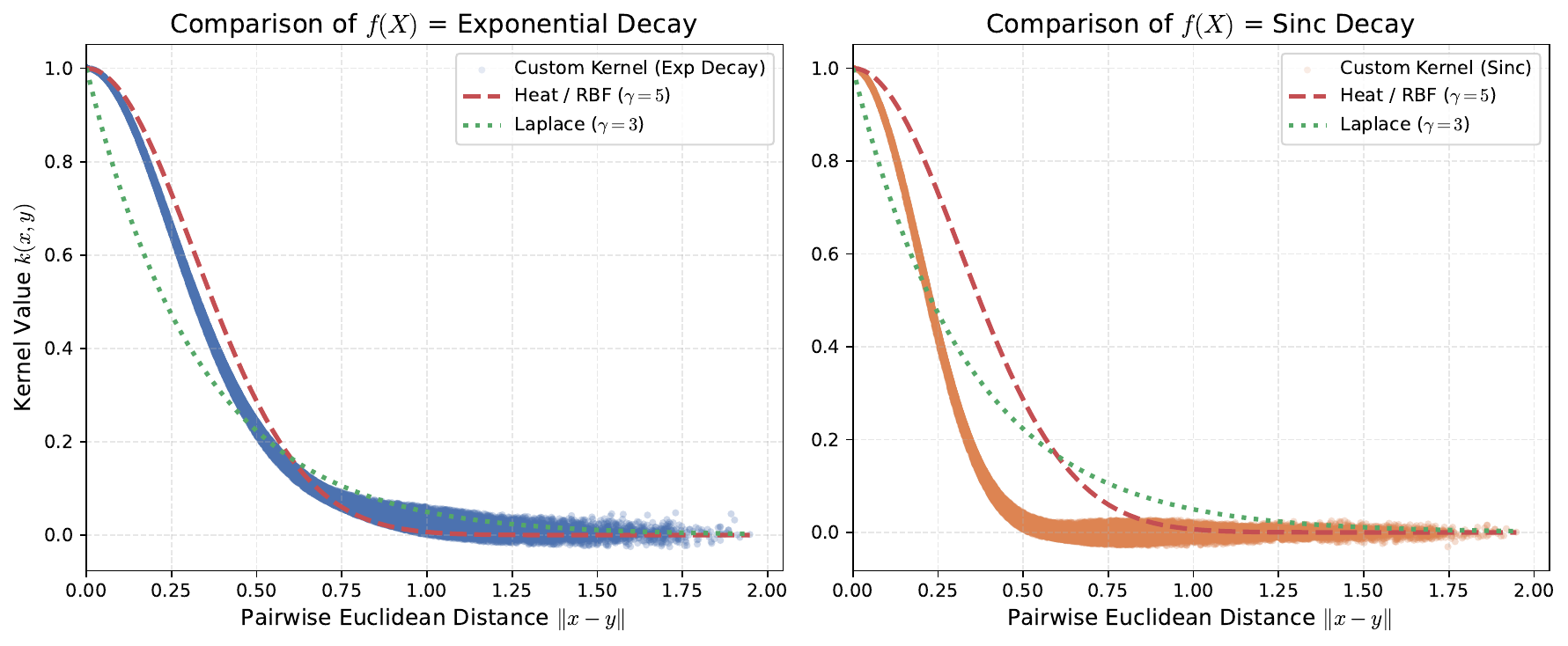}
    \caption{\textit{RelFlexformer} Mask Behavior Analysis. Comparison of our proposed kernels against standard RBF and Laplace kernels. As the Euclidean distance between points increases, the kernel value decays smoothly, mimicking the behavior of Heat and Laplace kernels (dashed lines). The tight variance in the scatter highlights the stability of our projection method across varying spatial scales.}
    \label{fig:placeholder}
\end{figure}

\section{Experiments}
\label{sec:experiments}
In this section, we describe the experimental setup and evaluate the performance of RelFlexformer.
For all experiments, we set $\mathcal{F}_{f}(\xi) := \exp(-\lambda \|\xi\|)$, where $\lambda$ is a modulation parameter.
We incorporate Performer into PCT~\citep{guo21} for object classification (Sec.~\ref{sec:object_classification}), PTv3~\citep{wu2024point} for semantic segmentation (Sec.~\ref{sec:segmentation}) and Dformer for RGB-D data (Sec.~\ref{sec:rgbd}).
We use standard dense self-attention as the baseline and compare it with Performer-based variants that replace dense self-attention with ReLU-Performer attention.
The evaluated variants include vanilla Performer~\citep{choromanski2020rethinking}, Performer with PointRoPE~\citep{yue2025litept}, RelFlexformer, and RelFlexformer with PointRoPE.
PointRoPE~\citep{yue2025litept} extends RoPE to 3D point clouds in a parameter-free manner, allowing us to assess its effect both with Performer and in combination with RelFlexformer. Following~\citep{he2021deberta, t5} insight that attention benefits from position information entering through multiple complementary pathways we combine RoPE (which encodes geometry into token representations) with our RPE (which modulates interaction strength by distance). 

For RGBD data, we also  compare against the recently introduced STRING~\citep{schenck2025learning}. We provide a description for all the datasets used for evaluation in App.~\ref{appendix:datasets}. Finally we provide extensive ablation studies showing only a small quadrature size is optimal for various tasks~\ref{appendix:ablations}.

\subsection{Object Classification}\label{sec:object_classification}
\begin{table}[t]
\caption{Accuracy (\%) comparison on ModelNet40 and Overall and average accuracy for ScanObjectNN classification benchmarks. Bold indicates the best score for each dataset.}
\label{tab:modelnet_scanobjectnn}
\centering
    \begin{tabular}{l|c|cc}
        \toprule
        \multirow{2}{*}{Attention}
        & ModelNet40~\citep{wu20153d}
        & \multicolumn{2}{c}{ScanObjectNN~\citep{uy2019revisiting}} \\
        \cmidrule(lr){2-4}
        & OA & mAcc & OA \\
        \midrule
        \gray{Transformer}
        & \gray{93.2}
        & \gray{80.5}  &\gray{84.0}\\
        \midrule

        Performer~\citep{choromanski2020rethinking}
        & 92.34
        & 80.47 & 83.16 \\
        
        ~~+ PointRoPE~\citep{yue2025litept}
        & 92.48
        & 80.36 & 83.88 \\

        \rowcolor{lightbestrow}
        \textbf{RelFlexformer (ours)}
        & \textbf{92.94}
        & \textbf{81.56} & \textbf{84.45} \\
        \rowcolor{bestrow}
        ~~+ PointRoPE
        & 92.55
        &  81.49 & 84.26 \\
        \bottomrule
    \end{tabular}
\end{table}
We evaluate object classification performance of PCT on ModelNet40~\citep{wu20153d} and ScanObjectNN~\citep{uy2019revisiting}. On both ModelNet40 and ScanObjectNN, Performer underperforms the Transformer baseline, indicating a performance gap introduced by efficient attention. Augmenting Performer with PointRoPE yields only marginal gains on both the datasets. RelFlexformer improves performance to \textbf{92.9}\%, reducing most of this gap on ModelNet40. On ScanObjectNN, our method \textit{surpasses the dense Transformer baseline}.
Tables~\ref{tab:modelnet40} and~\ref{tab:scanobjectnn} show additional comparisons with prior methods.

\subsection{RelFlexformer for Semantic Segmentation}\label{sec:segmentation}

\begin{table*}[t]
\caption{Semantic segmentation performance comparison on indoor (ScanNet, ScanNet200, and ScanNet++) and outdoor (nuScenes) benchmarks. Bold indicates the best score for each metric. RelFlexFormer replaces softmax attention (gray) with linear attention plus Fourier RPE, recovering or exceeding dense-attention accuracy at sub-quadratic cost.}
\label{tab:ptv3_sssn}
\centering
    \begin{adjustbox}{width=\textwidth}
        \begin{tabular}{l|ccc|ccc|ccc|ccc}
            \toprule
            \multirow{2}{*}{Attention}
            & \multicolumn{3}{c|}{ScanNet Val~\citep{dai2017scannet}}
            & \multicolumn{3}{c|}{ScanNet200 Val~\citep{rozenberszki2022language}}
            & \multicolumn{3}{c|}{ScanNet++ Val~\citep{yeshwanth2023scannet++}}
            & \multicolumn{3}{c}{nuScenes Val~\citep{caesar2020nuscenes}} \\
            \cmidrule(lr){2-4}
            \cmidrule(lr){5-7}
            \cmidrule(lr){8-10}
            \cmidrule(lr){11-13}
            & mIoU & mAcc & allAcc
            & mIoU & mAcc & allAcc
            & mIoU & mAcc & allAcc
            & mIoU & mAcc & allAcc \\
            \midrule
            \gray{Transformer}
            & \gray{77.6} & \gray{85.0} & \gray{92.0}
            & \gray{35.3} & \gray{46.0} & \gray{83.4}
            & \gray{48.2} & \gray{61.6} & \gray{87.0}
            & \gray{80.4} & \gray{87.2} & \gray{94.7} \\
            \midrule

            Performer~\citep{choromanski2020rethinking}
            & 74.8 & 83.8 & 91.0
            & 28.2 & 39.6 & 80.2
            & 48.1 & \textbf{62.5} & \textbf{87.2}
            & 72.0 & 80.2 & 93.5 \\
            
            ~~+ PointRoPE~\citep{yue2025litept}
            & 74.6 & 83.4 & 91.0
            & 34.1 & 44.9 & \textbf{82.9}
            & 48.1 & 60.9 & 86.9
            & 80.4 & 87.4 & \textbf{94.8} \\

            \rowcolor{lightbestrow}
            \textbf{RelFlexformer (ours)}
            & \textbf{76.8} & \textbf{85.0} & \textbf{91.9}
            & 34.0 & \textbf{45.4} & \textbf{82.9}
            & 48.7 & 62.2 & 86.8
            & 80.3 & \textbf{87.5} & 94.6 \\

            \rowcolor{bestrow}
            ~~+ PointRoPE~\citep{yue2025litept}
            & 76.6 & 84.5 & 91.6
            & \textbf{34.9} & 45.2 & \textbf{82.9}
            & \textbf{48.8} & 61.7 & 86.7
            & \textbf{81.2} & \textbf{87.5} & \textbf{94.8} \\
            \bottomrule
        \end{tabular}
    \end{adjustbox}
\end{table*}

\begin{table}[t]
\caption{Semantic segmentation performance comparison on S3DIS under 6-fold cross-validation. Bold indicates the best score for each metric. RelFlexFormer matches or exceeds dense-attention performance (gray) on 6-fold cross validation, closing the gap left by Performer alone.}
\label{tab:ptv3_s3dis}
\centering
    \begin{adjustbox}{width=\linewidth}
        \begin{tabular}{llccccccc}
            \toprule
            \textbf{Attention} & \textbf{Metric} & \textbf{Area1} & \textbf{Area2} & \textbf{Area3} & \textbf{Area4} & \textbf{Area5} & \textbf{Area6} & \textbf{6-Fold} \\
            \midrule
            \multirow{3}{*}{\gray{Transformer}}
            & \gray{allAcc} & \gray{93.22} & \gray{86.26} & \gray{94.56} & \gray{90.72} & \gray{91.67} & \gray{94.98} & \gray{91.90} \\
            & \gray{mAcc}   & \gray{89.92} & \gray{74.44} & \gray{94.45} & \gray{81.11} & \gray{78.92} & \gray{93.55} & \gray{85.31} \\
            & \gray{mIoU}   & \gray{83.01} & \gray{63.42} & \gray{86.66} & \gray{71.34} & \gray{73.43} & \gray{87.31} & \gray{77.70} \\
            \midrule
            \multirow{3}{*}{Performer~\citep{choromanski2020rethinking}}
            & allAcc & 92.35 & \textbf{88.53} & 94.47 & 85.96 & 90.86 & 91.20 & 90.56 \\
            & mAcc   & 88.56 & 76.97 & \textbf{93.69} & 78.31 & 75.63 & 85.77 & 83.16 \\
            & mIoU   & 80.74 & 62.60 & 86.02 & 62.60 & 69.75 & 77.35 & 73.18 \\
            \midrule
            \multirow{3}{*}{~~+  PointRoPE~\citep{yue2025litept}}
            & allAcc & 93.00 & 86.88 & 94.25 & 88.29 & 91.17 & 94.73 & 91.39 \\
            & mAcc   & 90.06 & 76.62 & 93.63 & 76.14 & \textbf{77.55} & 93.26 & 84.54 \\
            & mIoU   & 82.31 & 61.47 & 86.21 & 67.38 & 71.75 & 86.59 & 75.95 \\
            \midrule
            \rowcolor{lightbestrow}
            & allAcc & 92.96 & 88.07 & 94.51 & 88.36 & 91.14 & 94.65 & \textbf{91.62} \\
            \rowcolor{lightbestrow}
            & mAcc   & \textbf{90.59} & \textbf{77.33} & 93.00 & 80.92 & 76.79 & \textbf{93.33} & \textbf{85.33} \\
            \rowcolor{lightbestrow}
            \multirow{-3}{*}{\textbf{RelFlexformer (ours)}}
            & mIoU   & 81.93 & \textbf{63.21} & 86.20 & \textbf{68.28} & 71.01 & \textbf{86.88} & 76.25 \\
            \midrule
            \rowcolor{bestrow}
            & allAcc & \textbf{93.32} & 85.84 & \textbf{94.57} & \textbf{88.89} & \textbf{91.50} & \textbf{94.74} & 91.48 \\
            \rowcolor{bestrow}
            & mAcc   & 90.43 & 73.82 & 93.64 & \textbf{81.59} & 77.44 & 93.14 & 85.01 \\
            \rowcolor{bestrow}
            \multirow{-3}{*}{~~+ PointRoPE~\citep{yue2025litept}}
            & mIoU   & \textbf{82.79} & 62.87 & \textbf{86.75} & 68.12 & \textbf{72.14} & 86.79 & \textbf{76.58} \\
            \bottomrule
        \end{tabular}
    \end{adjustbox}
\end{table}
We implement RelFlexformer on top of PTv3~\citep{wu2024point} and follow the training protocol of Pointcept~\citep{pointcept2023}.
Tables~\ref{tab:ptv3_sssn} and~\ref{tab:ptv3_s3dis} compare different attention mechanisms within the PTv3 framework.
All variants are evaluated under the full fine-tuning setting on semantic segmentation benchmarks, including ScanNet~\citep{dai2017scannet}, ScanNet200~\citep{rozenberszki2022language}, ScanNet++~\citep{yeshwanth2023scannet++}, nuScenes~\citep{caesar2020nuscenes}, and S3DIS~\citep{armeni20163d}.
For additional comparisons with prior methods, please see Tables~\ref{tab:scannetv2}--\ref{tab:nuscenes_semseg}.

Table~\ref{tab:ptv3_sssn} shows that replacing Transformer with Performer reduces mIoU across all datasets.
Because Performer~\citep{choromanski2020rethinking} approximates the softmax kernel using random features, it may be less effective at capturing fine-grained spatial interactions in irregular 3D point clouds.
While PointRoPE improves Performer on ScanNet200, nuScenes, and most reported S3DIS areas, the improvements are not consistent across benchmarks.
In particular, it provides no improvement or slightly degrades mIoU on ScanNet, ScanNet++, and S3DIS Area 2.
These results suggest that RoPE-style 3D positional encoding can benefit Performer, while the non-uniform gains highlight the importance of additional relative positional modulation for capturing fine-grained geometric interactions.

RelFlexformer largely resolves this performance drop
 by introducing NU-FFT-driven RPE modulation into efficient attention.
Compared with vanilla Performer, RelFlexformer improves mIoU from 74.8 to \textbf{76.8} on ScanNet, from 28.2 to \textbf{34.0} on ScanNet200, from 48.1 to \textbf{48.7} on ScanNet++, and from 72.0 to \textbf{80.3} on nuScenes.
Table~\ref{tab:ptv3_s3dis} further shows that RelFlexformer improves mIoU over Performer on all six S3DIS held-out areas, with gains ranging from 0.18 to 9.53 points.
The 6-fold mIoU also increases from 73.18 to \textbf{76.25}.
The observed improvements demonstrate that the proposed RPE modulation substantially strengthens Performer-style efficient attention.
It consistently improves over vanilla Performer, closes much of the mIoU gap to dense Transformer attention, and \textit{even surpasses the Transformer on ScanNet++}.

We further examine the interaction between PointRoPE and RelFlexformer.
Across all datasets, RelFlexformer with PointRoPE achieves higher mIoU than Performer with PointRoPE, showing that the proposed RPE modulation provides additional benefits beyond token-level 3D positional encoding.
For example, mIoU increases from 34.1 to \textbf{34.9} on ScanNet200, from 48.1 to \textbf{48.8} on ScanNet++, and from 80.4 to \textbf{81.2} on nuScenes.
Similar improvements are also observed on ScanNet and S3DIS.
As a result, the combined model approaches the dense Transformer on ScanNet, ScanNet200, and most reported S3DIS areas, and \textit{even surpasses it on ScanNet++, nuScenes, and S3DIS Area 3}.

\subsection{RGB-D experiments}\label{sec:rgbd}
In this section, we demonstrate that RelFlexFormer integrates effectively with state-of-the-art architectures such as DFormer~\cite{dformer} for RGB-D semantic segmentation. Starting from a pre-trained DFormer-Base, we replace dense attention with ReLU-Performers and fine-tune on NYU Depth v2~\cite{nyu} and SUN RGB-D~(Table~\ref{tab:main_results}). RelFlexFormer consistently outperforms the Performer baseline and substantially narrows the gap to dense attention. On the challenging SUN RGB-D dataset with multisensor inputs, Performer exhibits a notable performance gap relative to the Transformer. Our method substantially improves performance and effectively closes this gap. Furthermore, we compare against the recently proposed RPE method STRING~\cite{schenck2025learning}. Our approach outperforms STRING while avoiding its additional parameter overhead ($\sim 70$K parameters). For additional comparisons with prior methods, please see Tables~\ref{tab:nyudepthv2} and \ref{tab:sun_rgbd}.
\begin{table}[t]
\centering
\small
\setlength{\tabcolsep}{10pt}
\renewcommand{\arraystretch}{1.15}
\caption{\textbf{RGB-D semantic segmentation results} (mIoU\,\%) on NYU Depth\,v2 and SUN\,RGB-D. All methods use DFormer-Base with HAM decoder. \emph{Performer} replaces softmax with ReLU linear attention. RelFlexformer is the best sub-quadratic variant and substantially closes the gap with dense transformer.}
\label{tab:main_results}
\vspace{2pt}
\begin{tabular}{@{} l c c c @{}}
\toprule
\textbf{Attention} & \textbf{Complexity} & \textbf{NYU\,v2} & \textbf{SUN\,RGB-D} \\
\midrule
\gray{Softmax}             & \gray{$O(n^2)$}      & \gray{55.6}           & \gray{51.2} \\
Performer                    & $O(n)$        & 54.44          & 48.49 \\
Performer + STRING~\citep{schenck2025learning}           & $O(n\log n)$  & 54.96            & 50.87 \\
\rowcolor{bestrow}
\textbf{RelFlexFormer (ours)} & $O(n\log n)$  & \textbf{55.32} & \textbf{51.04} \\
\bottomrule
\end{tabular}
\vspace{-6pt}
\end{table}

In summary, we show that RelFlexFormer significantly improves upon the Performer baseline, closing the gap with dense attention across all datasets and even surpassing Transformers on ScanObjectNN, S3DIS, nuScenes, and ScanNet++. Furthermore, when PointRoPE is competitive, combining it with RelFlexFormer yields additional gains, demonstrating that our architecture can effectively integrate multiple complementary inductive biases.

\section{Conclusion}
RelFlexformers provide a new class of efficient
 3D Transformers that combine expressive relative positional encodings with subquadratic attention.
 By framing geometric modulation via the Non-Uniform Fourier Transform, we have shown that it is possible to incorporate a broad family of integrable RPE functions without sacrificing the computational benefits of kernelized attention. Our approach achieves a complexity of $O(L \log L)$, 
 establishing it as a highly scalable architecture for processing large-scale, irregular point clouds
 that were previously bottlenecked by the quadratic cost of standard dense attention. Our empirical results across various 3D benchmarks demonstrate that the flexibility provided by our framework, specifically the ability to handle non-structured, heterogeneous token distributions, leads to consistent performance gains over the baseline Performer and even surpassing the dense Transformers.
\section*{Author Contributions}
BK conducted model classification and semantic segmentation experiments on several datasets using PCT and PTv3. AS conducted model classification and RGB-D experiments using PCT and DFormer. AS, KC and AD led the project and developed the RelFlexformer algorithm. In particular, KC developed the NU-FFT RPE mechanism. AD and MO served as senior advisors for the project. 

\bibliographystyle{plain}
\bibliography{references}

\newpage
\appendix

\section{Related Works on Depth Images}
\label{app:related_works}
In this section we discuss additional related works mostly focusing on depth images. 

The integration of depth information into semantic segmentation has evolved from simple heuristic-based approaches to sophisticated geometric modeling. Early methods, such as those by \cite{gupta2014learning}, utilized HHA encoding to map depth into a geocentric coordinate system, enabling the use of 2D CNNs pretrained on RGB datasets. However, these methods often suffered from a "modality gap" where the network failed to learn cross-modal dependencies effectively.

The introduction of the Vision Transformer (ViT) \citep{dosovitskiy2020image} facilitated more flexible fusion strategies. Early ViT-based architectures like TokenFusion \citep{wang2022multimodal} and CMX \citep{zhang2023cmx} focused on cross-modal feature alignment through shared attention mechanisms. Furthermore, hierarchical architectures such as the Swin Transformer \citep{liu2021swin} introduced shifted-window attention to capture multi-scale dependencies, which has become a staple for dense prediction tasks. Despite their success, these models predominantly treat the depth map as a supplementary 2D image, often neglecting the inherent 3D geometric priors of the scene.

A significant paradigm shift occurred with the move toward coordinate lifting, where 2D pixels are unprojected into a 3D camera coordinate space $(x, y, z)$. DFormer~\citep{dformer} and its successor DFormerv2 \citep{yin2025dformerv2} represent milestones in this direction, rethinking RGB-D representation by treating depth maps as geometry priors rather than simple texture maps. By leveraging camera intrinsics ($f_x, f_y, c_x, c_y$), lifting-based methods enable the model to reason about physical scale and spatial proximity. Our work extends this geometric intuition; while recent frameworks like OmniSegmentor \citep{yin2026omnisegmentor} explore flexible multi-modal learning, we argue that capturing the interaction between lifted tokens requires a nuanced understanding of relative spatial distances in 3D space. Unlike 2D-centric models, we operate on a unified geometric representation. This allows our proposed RPE to transcend the 2D pixel grid, effectively unifying the geometric reasoning required for both sparse point cloud representations and dense RGB-D lifting within a single, geometry-agnostic framework.

Relative Positional Encoding (RPE) allows the attention mechanism to attend to the displacement between tokens, a concept popularized in 2D by Swin Transformer \citep{liu2021swin}. However, 2D RPE is typically constrained to discrete indices. Recent work on 3D-aware transformers \citep{zhao2021point, wu2024point} has sought to inject 3D structural integrity into the attention kernel. Most notably, STRING \citep{schenck2025learning} introduced a family of separable, translation-invariant position encodings that generalize Rotary Position Encodings (RoPE) to multidimensional coordinates using a Lie group framework. In this work, we propose a generalized 3D RPE where STRING can be viewed as a specific instantiation. By defining our RPE as a more general operator over Euclidean space, we ensure that the attention mechanism is biased by actual physical distance, which is critical for resolving depth-dependent occlusions and complex 3D layouts across diverse data modalities. 

While 3D-aware geometric reasoning provides superior accuracy, it is inherently limited by the $O(N^2)$ computational complexity of standard attention. This bottleneck is acute in high-resolution RGB-D segmentation. To maintain tractability, we adopt the Performer architecture \citep{choromanski2020rethinking}, utilizing a linear-time approximation of the attention matrix. This allows our model to scale to the high token counts required for dense 3D lifting, achieving a balance of geometric fidelity and efficiency that is often unattainable for standard quadratic Transformers.

\section{Implementation Details}
\label{appendix:implementation_details}
In this section we will describe the datasets and our implementations for the point cloud based tasks and the RGB-D datasets.
To provide a strong and widely used efficient-attention baseline, we additionally implement Performer~\citep{choromanski2020rethinking} across our point cloud and RGB-D experiments. 
Standard dot-product self-attention explicitly computes pairwise interactions between all input tokens, resulting in quadratic complexity with respect to the number of tokens. 
Given query, key, and value matrices $\mathbf{Q}, \mathbf{K}, \mathbf{V} \in \mathbb{R}^{L \times d}$, dense self-attention is computed as

\begin{align}
    \mathrm{Att}(\mathbf{Q}, \mathbf{K}, \mathbf{V}) = \mathbf{D}^{-1}\mathbf{A}\mathbf{V}, \quad \mathbf{A}=\exp(\mathbf{Q}\mathbf{K}^{\top}/\sqrt{d}), \quad \mathbf{D}=\mathrm{diag}(\mathbf{A}\mathbf{1}_L).
\end{align} 

Although this formulation is expressive, the cost of constructing the $L \times L$ attention matrix becomes expensive for dense point clouds and high-resolution RGB-D inputs.

Performer approximates softmax attention using a positive random feature map $\phi:\mathbb{R}^{d} \rightarrow \mathbb{R}^{m}$.

\begin{align}
    \widehat{\mathrm{Att}} (\mathbf{Q}, \mathbf{K}, \mathbf{V}) = \widehat{\mathbf{D}}^{-1} (\phi(\mathbf{Q})(\phi(\mathbf{K})^{\top} \mathbf{V})),
    \quad \widehat{\mathbf{D}} = \mathrm{diag} (\phi(\mathbf{Q})(\phi(\mathbf{K})^{\top} \mathbf{1}_L) ).
\end{align}

This avoids explicitly materializing the full attention matrix and reduces the attention complexity from quadratic to linear in $L$, up to the number of random features. 
Therefore, Performer serves as a suitable baseline for evaluating whether RelFlexformer improves over existing linear-attention mechanisms in 3D perception settings. In our work, $\phi = $ReLU.

For fair comparison, we integrate Performer by replacing only the dense self-attention module in each corresponding backbone, while keeping the remaining architecture, input preprocessing, data augmentation, optimizer, and training schedule unchanged unless otherwise specified. We use fixed projection matrices so our mask adds no additional parameters.

\subsection{Datasets}\label{appendix:datasets}
To evaluate the versatility and scalability of RelFlexformer, we conduct experiments across a diverse set of 3D benchmarks:

\begin{itemize}[leftmargin=2.5em]
    \item \textbf{ModelNet40~\citep{wu20153d}:} A synthetic object classification dataset with 12,311 CAD models from 40 categories, split into 9,843 training and 2,468 testing samples.

    \item \textbf{ScanObjectNN~\citep{uy2019revisiting}:} A real-world object classification dataset with about 15,000 scanned objects across 15 categories, featuring occlusion, partial scans, and background clutter.

    \item \textbf{S3DIS~\citep{armeni20163d}:} A large-scale indoor semantic segmentation dataset with six areas and 13 classes. We follow the standard 6-fold cross-validation protocol.

    \item \textbf{ScanNet v2~\citep{dai2017scannet}:} A large-scale indoor RGB-D dataset with 20 semantic categories. We use the standard split of 1,201 training and 312 validation scans.

    \item \textbf{ScanNet200~\citep{rozenberszki2022language}:} An extension of ScanNet v2 with 200 fine-grained categories, designed to evaluate robustness under long-tailed class distributions.

    \item \textbf{ScanNet++~\citep{yeshwanth2023scannet++}:} A high-fidelity indoor 3D benchmark with accurate geometry, RGB-D captures, and fine-grained semantic annotations.

    \item \textbf{nuScenes~\citep{caesar2020nuscenes}:} A large-scale outdoor autonomous driving dataset with 1,000 urban scenes. We use it to evaluate RelFlexformer on sparse LiDAR point clouds across 16 classes.

    \item \textbf{NYU Depth v2~\citep{nyu}:} 1,449 indoor RGB-D scenes (795 train / 654 test) from a single Kinect v1 sensor, 40 semantic classes, 480×640 resolution.                          
    
    \item \textbf{SUN RGB-D~\citep{sunrgbd}:} 10,335 indoor RGB-D scenes (5,285 train / 5,050 test) from 4 different sensors (Kinect v1/v2, Xtion, RealSense), 37 semantic classes. The multi-sensor variation makes position encoding harder due to varying intrinsics and depth scales.   
\end{itemize}

\subsection{Point Cloud Based Tasks.}
In this section, we will describe our setup and hyperparameters for point cloud classification and segmentation. 

\subsubsection{Classification}

\paragraph{ModelNet40.}
Our implementation is based on the official PCT~\citep{guo21} codebase.
We evaluate on ModelNet40, which consists of 12{,}311 CAD models from 40 object categories, split into 9{,}843 training and 2{,}468 test objects.
We uniformly sample 1{,}024 points from each object.

We use the PCT classification backbone. The model first applies neighbor embedding based on farthest-point sampling and $k$-NN grouping to capture local geometric structures. The embedded point features are then processed by four stacked efficient attention (Performer) layers with feature dimension $d{=}256$ and 4 attention heads. We concatenate the outputs of the four attention layers and apply a linear projection. Global max pooling and average pooling are then concatenated to obtain an object-level representation, which is passed to an MLP classification head for 40-way classification.

We optimize with SGD using momentum $0.9$ and weight decay $5{\times}10^{-4}$. The initial learning rate is $0.01$ and is scheduled with cosine annealing over 250 epochs. Batch size is 32.

Training augmentations include random point dropout, random rotation around the upright axis, uniform scaling, random shift, jittering, and point shuffling.

For RPE, we sample $S(=8)$ frequency vectors from a random normal distribution at initialization and keep them fixed throughout training. We set $\lambda =1$. All experiments run on NVIDIA H100 80GB HBM3 Tensor Core GPUs.

\paragraph{ScanObjectNN.}
We evaluate on the PB\_T50\_RS variant (hardest split): 2{,}902 training and 581 test objects across 15 classes, consisting of real-world scans with background
clutter and partial views. We subsample 1{,}024 points per object.

We use the PCT backbone: farthest-point sampling with $k$-NN grouping, four self-attention layers ($d{=}256$, 4 heads), global max+average pooling, and an MLP
classification head ($256 \to 256 \to 15$) with dropout $0.5$.

We optimize with SGD (momentum $0.9$, weight decay $5{\times}10^{-4}$) using an initial learning rate of $0.01$ with cosine annealing over 250 epochs. Batch size is 32.

Training augmentations include random point dropout (up to 87.5\%), random $y$-axis rotation, uniform scaling $[0.8, 1.25]$, random shift $\pm 0.1$, jitter
($\sigma{=}0.01$, clip $0.02$), and point shuffle.

For RPE, we sample $S$ frequency vectors from a random normal distribution at
initialization and keep them fixed throughout training. We ablate $S \in \{16, 24, 32, 64, 128, 256\}$ and $\lambda \in \{0.5, 0.75, 1, 1.5, 1.75, 2, 2.25, 3\}$. NVIDIA RTX PRO 6000 (Blackwell) GPUs with 96\,GB VRAM.

\subsubsection{Segmentation}
For experiments based on Pointcept~\citep{pointcept2023}, we use its official PTv3 implementation.
Since our Performer-based variants replace dense Transformer attention with Performer attention, we do not employ FlashAttention, which is designed for dense attention computation.
Accordingly, in the PTv3-based experiments, we set the patch size of both the encoder and decoder to $128$.
Also, Z, TZ, H, and TH denote Z-order, Trans Z-order, Hilbert order, and Trans Hilbert order, respectively.

RelFlexformer is implemented by introducing a continuous relative positional mask into Performer attention.
For all RelFlexformer experiments, we define the Fourier-domain modulation function as $\mathcal{F}_{f}(\xi) := \exp(-\lambda \|\xi\|)$, where $\lambda$ is a modulation parameter.
Unless otherwise specified, the quadrature size $S$ used in the NU-FFT-based approximation of the RelFlexformer relative positional mask is fixed to $8$.

For the PointRoPE variants, we adjust the channel and head configurations to make the per-head feature dimension compatible with the 3D RoPE decomposition.
PointRoPE~\citep{yue2025litept} splits each attention-head feature into three axis-wise subspaces corresponding to the $x$, $y$, and $z$ coordinates and applies 1D RoPE independently to each subspace.
Since RoPE applies pair-wise rotations within each axis, the per-head dimension should be divisible by $6$.
Following the LitePT-S$^{*}$~\citep{yue2025litept} configuration, we set the channel and head dimensions of the PointRoPE attention so that the per-head dimension is $18$.

The detailed configurations are provided in Tables~\ref{tab:model_config_without} and~\ref{tab:model_config_with}.
All experiments run on 4$\times$NVIDIA H100 80GB HBM3 Tensor Core GPUs. We use FP32 training throughout.

\begin{table*}[h]
\centering
\caption{Model configurations for PTv3-based Performer and RelFlexformer variants without PointRoPE.}
\label{tab:model_config_without}
    \centering
    \begin{tabular}{lc}
        \toprule
        \textbf{Config} & \textbf{Value} \\
        \midrule
        \multicolumn{2}{c}{\textit{\textbf{Variants without PointRoPE}}} \\
        \midrule
        Serialization pattern & Z + TZ + H + TH \\
        Patch interaction & Shift Order + Shuffle Order \\
        Positional encoding & xCPE \\
        Embedding depth & 2 \\
        Embedding channels & 32 \\
        Encoder depth & [2, 2, 6, 2] \\
        Encoder channels & [64, 128, 256, 512] \\
        Encoder heads & [4, 8, 16, 32] \\
        Encoder patch size & [128, 128, 128, 128] \\
        Decoder depth & [2, 2, 2, 2] \\
        Decoder channels & [64, 64, 128, 256] \\
        Decoder heads & [4, 4, 8, 16] \\
        Decoder patch size & [128, 128, 128, 128] \\
        Down stride & [$\times$2, $\times$2, $\times$2, $\times$2] \\
        MLP ratio & 4 \\
        QKV bias & True \\
        Drop path & 0.3 \\
        Backbone output channels & 64 \\
        \midrule
        \multicolumn{2}{c}{\textit{\textbf{RelFlexformer-specific}}} \\
        \midrule
        Number of sampled frequencies ($S$) & 8 \\
        Modulation parameter ($\lambda$) & $\{1.0, 2.0, 3.0\}$ \\
        \bottomrule
    \end{tabular}
\end{table*}
\begin{table*}[h]
\centering
\caption{Model configurations for PTv3-based Performer and RelFlexformer variants with PointRoPE.}
\label{tab:model_config_with}
    \centering
    \begin{tabular}{lc}
        \toprule
        \textbf{Config} & \textbf{Value} \\
        \midrule
        \multicolumn{2}{c}{\textit{\textbf{Variants with PointRoPE}}} \\
        \midrule
        Serialization pattern & Z + TZ + H + TH \\
        Patch interaction & Shift Order + Shuffle Order \\
        Positional encoding & xCPE + PointRoPE \\
        Embedding depth & 2 \\
        Embedding channels & 32 \\
        Encoder depth & [2, 2, 6, 2] \\
        Encoder channels & [72, 144, 252, 504] \\
        Encoder heads & [4, 8, 14, 28] \\
        Encoder patch size & [128, 128, 128, 128] \\
        Encoder PointRoPE frequency & [100, 100, 100, 100] \\
        Decoder depth & [2, 2, 2, 2] \\
        Decoder channels & [72, 72, 144, 252] \\
        Decoder heads & [4, 4, 8, 14] \\
        Decoder patch size & [128, 128, 128, 128] \\
        Decoder PointRoPE frequency & [100, 100, 100, 100] \\
        Down stride & [$\times$2, $\times$2, $\times$2, $\times$2] \\
        MLP ratio & 4 \\
        QKV bias & True \\
        Drop path & 0.3 \\
        Backbone output channels & 72 \\
        \midrule
        \multicolumn{2}{c}{\textit{\textbf{RelFlexformer-specific}}} \\
        \midrule
        Number of sampled frequencies ($S$) & 8 \\
        Modulation parameter ($\lambda$) & $\{1.0, 2.0, 3.0\}$ \\
        \bottomrule
    \end{tabular}
\end{table*}

\clearpage
\subsection{Segmentation on RGB-D datasets}
All experiments use DFormer-Base as the backbone with a HAM decoder (embedding dim 512). The \emph{Performer} variant replaces softmax attention with ReLU-based linear attention~\cite{choromanski2020rethinking} in the window attention path. \emph{RelFlexFormer} augments Performer attention with a RPE mask. All models initialize from official DFormer-Base pretrained weights (trained with softmax attention). The full backbone is fine-tuned end-to-end.

\paragraph{Hyperparameters.}
Table~\ref{tab:hyperparams} presents the detailed training configurations for RGB-D datasets.

\begin{table}[h]
\centering
\small
\setlength{\tabcolsep}{8pt}
\renewcommand{\arraystretch}{1.15}
\caption{\textbf{Training configuration} for NYU Depth\,v2 and SUN\,RGB-D.}
\label{tab:hyperparams}
\vspace{2pt}
\begin{tabular}{@{} l cc @{}}
\toprule
& \textbf{NYU\,v2} & \textbf{SUN\,RGB-D} \\
\midrule
Optimizer                       & AdamW              & AdamW \\
Learning rate                   & $6{\times}10^{-5}$ & $8{\times}10^{-5}$ \\
LR schedule                     & Poly ($p{=}0.9$)   & Poly ($p{=}0.9$) \\
Weight decay                    & 0.01               & 0.01 \\
Batch size                      & 8                  & 16 \\
Epochs                          & 500                & 300 \\
LR warmup epochs                & 10                 & 10 \\
Drop path rate                  & 0.1                & 0.1 \\
\midrule
\multicolumn{3}{@{}l}{\textit{Fourier RPE}} \\[2pt]
\quad $S$ (frequencies / head)  & 16                 & 16 \\
\quad $\lambda$ (kernel decay)  & 2.0                & 2.0 \\
\quad RPE warmup epochs         & 50 (cosine)        & 30 (cosine) \\
\midrule
\multicolumn{3}{@{}l}{\textit{Evaluation}} \\[2pt]
\quad Crop size                 & $480{\times}640$    & $480{\times}480$ \\
\quad Scales                    & $\{1.0\}$          & $\{1.0\}$ \\
\quad Horizontal flip           & \checkmark         & \checkmark \\
\midrule
\multicolumn{3}{@{}l}{\textit{Precision}} \\[2pt]
\quad Training                  & FP32               & FP32 \\
\quad Validation                & AMP                & AMP \\
\bottomrule
\end{tabular}
\end{table}

\paragraph{Camera Intrinsics.}
NYU Depth\,v2 uses a single Microsoft Kinect\,v1 sensor ($f_x{=}518.86$, $f_y{=}519.47$, $c_x{=}325.58$, $c_y{=}253.74$); all images share identical intrinsics.

SUN\,RGB-D aggregates data from four sensors with varying focal lengths:
\begin{table}[h]
\centering
\small
\setlength{\tabcolsep}{10pt}
\renewcommand{\arraystretch}{1.15}
\caption{\textbf{Sensor distribution} in SUN\,RGB-D.}
\label{tab:sensors}
\vspace{2pt}
\begin{tabular}{@{} l r cc @{}}
\toprule
\textbf{Sensor} & \textbf{Images} & $f_x$ & $f_y$ \\
\midrule
Kinect\,v1      & 1{,}993  & ${\sim}519$ & ${\sim}519$ \\
\rowcolor{lightgray}
Kinect\,v2      & 3{,}684  & 529.5       & 529.5 \\
Asus Xtion      & 3{,}384  & 570.3       & 570.3 \\
\rowcolor{lightgray}
RealSense       & 1{,}152  & 693.7       & 693.7 \\
\bottomrule
\end{tabular}
\end{table}

Per-image intrinsics are loaded from sensor metadata (98.8\% coverage; unmatched images default to Kinect\,v2 values). Pixels are back-projected to 3D world coordinates and normalized per-image to zero mean and unit variance before computing the action of the mask matrix.

\paragraph{RPE Warmup Schedule.}

On NYU, applying the full RPE mask from epoch\,0 yields no improvement over the Performer baseline (54.42 vs.\ 54.44\,mIoU). We attribute this to the RPE mask interfering with the pretrained backbone during early optimization. A warmup schedule resolves this: the RPE mask $M$ is blended from identity to full strength over the first $N$ epochs via
\begin{equation}
\hat{M} = \mathbf{1} + \alpha\,(M - \mathbf{1}), \qquad \alpha = \tfrac{1}{2}\bigl(1 - \cos(\pi\, \min(t/N,\,1))\bigr),
\end{equation}
where $t$ is the current epoch. At $\alpha{=}0$ the model reduces to a pure Performer; at $\alpha{=}1$ the full RPE is applied. With cosine warmup ($N{=}50$ on NYU, $N{=}30$ on SUN), the result improves to \textbf{55.32}\,mIoU on NYU (+0.88 over Performer-only) and \textbf{50.65}\,mIoU on SUN (+2.16 over Performer-only).

\paragraph{Hardware}

All experiments run on NVIDIA RTX PRO 6000 (Blackwell) GPUs with 98\,GB VRAM. We use FP32 training throughout.

\section{Ablation Studies}\label{appendix:ablations}
In this section we will study the effect of the quadrature size $S$ and the modulation parameter $\lambda$.

\subsection{Quadrature Size}
We ablate the quadrature size $S$ across various datasets. A small quadrature suffices for optimal performance.

\begin{figure}[t]
    \centering
    \includegraphics[width=\textwidth]{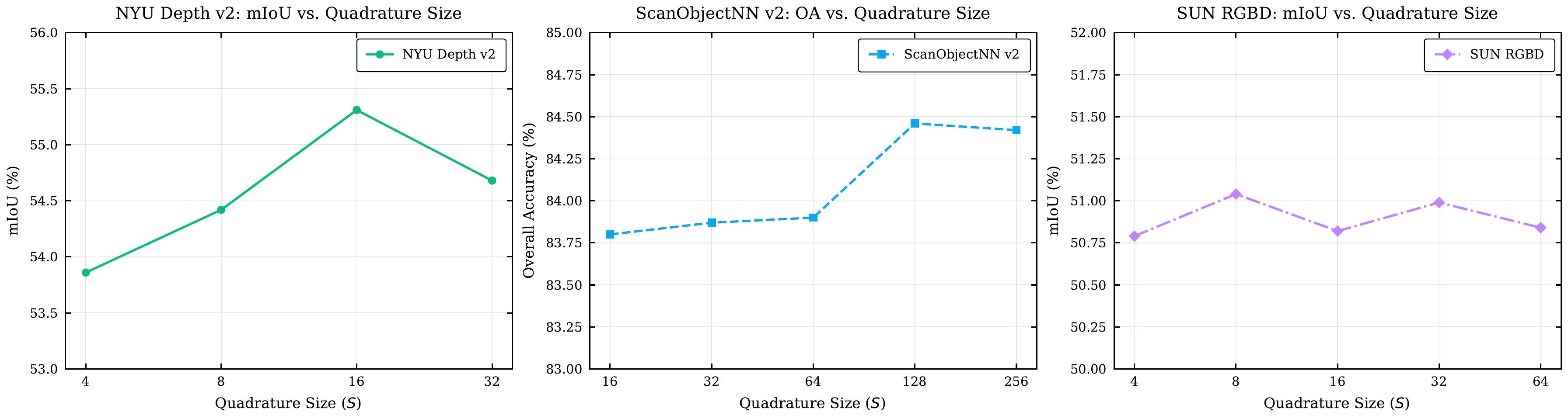}
    \caption{Effect of the quadrature size $S$ per head across different datasets. The plots illustrate the performance trade-offs on NYU Depth v2 (left, mIoU\%), ScanObjectNN v2 (middle, Overall Accuracy\%), and SUN RGB-D (right, mIoU\%) for $\lambda=2$.}
    \label{fig:ablation_S_plots}
\end{figure}
As shown in Fig.~\ref{fig:ablation_S_plots}, increasing $S$ improves performance only up to a task-dependent saturation point, with $\lambda$ fixed to $2.0$. On NYU Depth~v2, the best result is achieved with $S{=}16$, while ScanObjectNN reaches its peak accuracy at $S{=}128$ with almost no further gain from $S{=}256$.

Finally, we show that the quadrature size remains constant even when higher number of points are sampled per point cloud. Table~\ref{tab:ablation_S_scanobj} shows that higher quadrature size is not necessary even when 2048 points are sampled per point cloud.

\begin{table}[h]
\centering
\small
\setlength{\tabcolsep}{12pt}
\renewcommand{\arraystretch}{1.15}
\caption{\small{\textbf{Effect of the size of the quadrature} $S$ per head on ScanObjectNN\,v2 (Overall Accuracy\,\%). $\lambda{=}2$.}}
\label{tab:ablation_S_scanobj}
\vspace{2pt}
\begin{tabular}{@{} l cccccc @{}}
\toprule
& $S{=}16$ & $S{=}32$ & $S{=}64$ & $S{=}128$ & $S{=}256$\\
\midrule
OA & 83.12 & 83.54 & 83.77 & \textbf{84.26} & 84.12  \\
\bottomrule
\end{tabular}
\vspace{-6pt}
\end{table}

\subsection{Modulation parameter}
In this section we ablate over $\lambda$ across various datasets.
\begin{table*}[h]
\caption{Ablation of the modulation parameter $\lambda$ on indoor (ScanNet, ScanNet200, and ScanNet++) and outdoor (nuScenes) semantic segmentation benchmarks.}
\label{tab:ptv3_variants111111111}
\centering
    \begin{adjustbox}{width=\textwidth}
        \begin{tabular}{l|ccc|ccc|ccc|ccc}
            \toprule
            \multirow{2}{*}{Methods}
            & \multicolumn{3}{c|}{ScanNet Val~\citep{dai2017scannet}}
            & \multicolumn{3}{c|}{ScanNet200 Val~\citep{rozenberszki2022language}}
            & \multicolumn{3}{c|}{ScanNet++ Val~\citep{yeshwanth2023scannet++}}
            & \multicolumn{3}{c}{nuScenes Val~\citep{caesar2020nuscenes}} \\
            \cmidrule(lr){2-4}
            \cmidrule(lr){5-7}
            \cmidrule(lr){8-10}
            \cmidrule(lr){11-13}
            & mIoU & mAcc & allAcc
            & mIoU & mAcc & allAcc
            & mIoU & mAcc & allAcc
            & mIoU & mAcc & allAcc \\
            \midrule
            \gray{Transformer}
            & \gray{77.6} & \gray{85.0} & \gray{92.0}
            & \gray{35.3} & \gray{46.0} & \gray{83.4}
            & \gray{48.2} & \gray{61.6} & \gray{87.0}
            & \gray{80.4} & \gray{87.2} & \gray{94.7} \\

            Performer
            & 74.8 & 83.8 & 91.0
            & 28.2 & 39.6 & 80.2
            & 48.1 & 62.5 & 87.2
            & 72.0 & 80.2 & 93.5 \\

            RelFlexformer ($\lambda\!=\!1.0$)
            & 76.9 & 84.8 & 91.8
            & 34.0 & 45.4 & 82.9
            & 48.7 & 62.2 & 86.8
            & 79.9 & 87.3 & 94.5 \\
    
            RelFlexformer ($\lambda\!=\!2.0$)
            & 76.8 & 85.0 & 91.9
            & 33.9 & 44.4 & 82.7
            & 48.3 & 62.4 & 87.1
            & 80.0 & 87.3 & 94.5 \\
            
            RelFlexformer ($\lambda\!=\!3.0$)
            & 75.7 & 84.1 & 91.5
            & 33.8 & 45.2 & 82.7
            & 48.7 & 62.0 & 87.0
            & 80.3 & 87.5 & 94.6 \\
            \bottomrule
        \end{tabular}
    \end{adjustbox}
\end{table*}

\begin{table}[h]
\caption{Ablation of the modulation parameter $\lambda$ on S3DIS 6-fold cross-validation.}
\label{tab:s3dis_6fold111111}
\centering
    \begin{adjustbox}{width=\linewidth}
        \begin{tabular}{llccccccc}
            \toprule
            \textbf{Method} & \textbf{Metric} & \textbf{Area1} & \textbf{Area2} & \textbf{Area3} & \textbf{Area4} & \textbf{Area5} & \textbf{Area6} & \textbf{6-Fold} \\
            \midrule
            \multirow{3}{*}{\gray{Transformer}}
            & \gray{allAcc} & \gray{93.22} & \gray{86.26} & \gray{94.56} & \gray{90.72} & \gray{91.67} & \gray{94.98} & \gray{91.90} \\
            & \gray{mAcc}   & \gray{89.92} & \gray{74.44} & \gray{94.45} & \gray{81.11} & \gray{78.92} & \gray{93.55} & \gray{85.31} \\
            & \gray{mIoU}   & \gray{83.01} & \gray{63.42} & \gray{86.66} & \gray{71.34} & \gray{73.43} & \gray{87.31} & \gray{77.70} \\
            \midrule
            \multirow{3}{*}{Performer}
            & allAcc & 92.35 & 88.53 & 94.47 & 85.96 & 90.86 & 91.20 & 90.56 \\
            & mAcc   & 88.56 & 76.97 & 93.69 & 78.31 & 75.63 & 85.77 & 83.16 \\
            & mIoU   & 80.74 & 62.60 & 86.02 & 62.60 & 69.75 & 77.35 & 73.18 \\
            \midrule
            \multirow{3}{*}{RelFlexformer ($\lambda\!=\!1.0$)}
            & allAcc & 92.96 & 88.07 & 94.51 & 88.36 & 91.01 & 94.75 & 91.61 \\
            & mAcc   & 90.59 & 77.33 & 93.00 & 80.92 & 75.95 & 93.15 & 85.16 \\
            & mIoU   & 81.93 & 63.21 & 86.20 & 68.28 & 70.60 & 86.58 & 76.13 \\
            \midrule
            \multirow{3}{*}{RelFlexformer ($\lambda\!=\!2.0$)}
            & allAcc & 92.88 & 86.69 & 93.84 & 88.16 & 91.08 & 94.65 & 91.22 \\
            & mAcc   & 89.67 & 74.53 & 90.95 & 79.58 & 77.15 & 93.33 & 84.20 \\
            & mIoU   & 81.58 & 62.82 & 83.44 & 67.55 & 71.13 & 86.88 & 75.57 \\
            \midrule
            \multirow{3}{*}{RelFlexformer ($\lambda\!=\!3.0$)}
            & allAcc & 92.72 & 88.43 & 93.88 & 88.03 & 91.14 & 94.48 & 91.45 \\
            & mAcc   & 89.26 & 77.35 & 92.69 & 79.79 & 76.79 & 92.70 & 84.76 \\
            & mIoU   & 81.48 & 62.76 & 84.64 & 66.45 & 71.01 & 86.16 & 75.42 \\
            \bottomrule
        \end{tabular}
    \end{adjustbox}
\end{table}

We study the effect of the modulation parameter $\lambda$ for the exponential decay by evaluating $\lambda \in \{1.0, 2.0, 3.0\}$.

\paragraph{Segmentation on Point Clouds}
RelFlexformer outperforms Performer across all tested values of $\lambda$ on the main benchmark-level mIoU metrics, indicating that the proposed modulation does not rely on a single carefully tuned setting.
Overall, $\lambda=1.0$ gives the best mIoU on ScanNet, ScanNet200, and the S3DIS 6-fold evaluation, while $\lambda=3.0$ performs best on nuScenes and ties with $\lambda=1.0$ on ScanNet++.
For S3DIS Area 5, $\lambda=2.0$ achieves the best mIoU.
This suggests that different scene distributions may benefit from different spatial modulation scales.
Nevertheless, the performance variation across $\lambda$ is relatively small compared with the improvement over Performer, showing that RelFlexformer provides robust gains under different modulation strengths.

While dense PTv3 attention remains a strong reference, RelFlexformer consistently narrows the gap between Performer and Transformer attention and even matches or exceeds the dense-attention reference in some metrics.
These results support our main claim that Performer-style efficient attention benefits from explicit geometric modulation when applied to irregular 3D data.

\paragraph{Classification and RGBD data}
We ablate over $\lambda$ across all three benchmarks (Table~\ref{tab:lambda_ablation_rgbd}). $\lambda=2$ is optimal or near-optimal in these settings, improving over the performer baseline.
\begin{table}[h]
  \centering                                                                             
  \caption{Effect of $\lambda$ across datasets. Best result per dataset in \textbf{bold}.}                              
  \label{tab:lambda_ablation_rgbd}                          
  \setlength{\tabcolsep}{5pt}               
  \begin{tabular}{lccc}                             
  \toprule                                                      
  $\lambda$ & NYU (mIoU) & SUN (mIoU) & ScanObjNN (OA) \\              
  \midrule                                                          
  1 & 54.34 & 48.56 & 83.24 \\                                       
  2 & \textbf{55.32} & \textbf{51.04} & \textbf{84.45} \\            
  3 & 54.08 & 48.18 & 82.79 \\                                    
  \midrule                                                           
  \rowcolor{gray!10}     
  Performer & 54.44 & 48.49 & 83.17 \\     
  \bottomrule              
  \end{tabular}         
\end{table}

\clearpage
\section{Additional Comparisons}
In this section, we provide additional benchmark comparisons for the results reported in Tables~\ref{tab:ptv3_sssn} and~\ref{tab:ptv3_s3dis}.
While the main text focuses on controlled evaluations within the same PTv3 framework, the following tables compare RelFlexformer with representative prior methods reported on the corresponding datasets.
These comparisons are intended to place RelFlexformer in a broader empirical context across commonly reported point-based, convolutional, Transformer-based, Mamba-based, and Performer-based methods.

\subsection{Classification}
\begin{table*}[h]
\centering
\begin{minipage}[t]{0.48\linewidth}
    \centering
    \caption{Classification accuracy on ModelNet40 using 1,024 input points without voting. Bold denotes the best result in the Performer group.}
    \label{tab:modelnet40}
    \small
    \setlength{\tabcolsep}{3pt}
    \renewcommand{\arraystretch}{0.95}
    \begin{tabularx}{\linewidth}{@{}Yc@{}}
        \toprule
        \textbf{Method} & \textbf{OA} \\
        \midrule
        \multicolumn{2}{@{}c@{}}{\textit{\textbf{CNN}}} \\
        \midrule
        PointNet~\citep{qi17}                    & 89.2 \\
        A-SCN~\citep{xie2018attentional}         & 89.9 \\
        PointNet++~\citep{qi2017pointnet++}      & 90.7 \\
        PointGrid~\citep{le2018pointgrid}        & 92.0 \\
        PCNN~\citep{atzmon2018point}             & 92.3 \\
        PointCNN~\citep{li2018pointcnn}          & 92.5 \\
        PointConv~\citep{wu2019pointconv}        & 92.5 \\
        P2Sequence~\citep{liu2019point2sequence} & 92.6 \\
        DGCNN~\citep{wang19}                     & 92.9 \\
        RS-CNN~\citep{liu2019relation}           & 92.9 \\
        PointASNL~\citep{yan2020pointasnl}       & 92.9 \\
        \midrule
        \multicolumn{2}{@{}c@{}}{\textit{\textbf{Transformer}}} \\
        \midrule
        PCT~\citep{guo21}                   & 93.2 \\
        OctFormer~\citep{wang2023octformer} & 92.7 \\
        \midrule
        \multicolumn{2}{@{}c@{}}{\textit{\textbf{Mamba}}} \\
        \midrule
        Point Mamba (w/ OctFormer)~\citep{liu2024point} & 92.7 \\
        Point Mamba (w/ PCT)~\citep{liu2024point}       & 93.4 \\
        PointMamba~\citep{liang2024pointmamba}          & 93.6 \\
        PCM~\citep{zhang2025point}                      & 93.4 \\
        \midrule
        \multicolumn{2}{@{}c@{}}{\textit{\textbf{Performer}}} \\
        \midrule
        Performer (w/ PCT) & 92.3 \\
        \rowcolor{bestrow}
        \textbf{RelFlexformer (w/ PCT) (ours)} & \textbf{92.9} \\
        \rowcolor{bestrow}
        ~~+ PointRoPE~\citep{yue2025litept} & 92.6 \\
        \bottomrule
    \end{tabularx}
\end{minipage}
\hfill
\begin{minipage}[t]{0.48\linewidth}
    \centering
    \caption{Classification accuracy on ScanObjectNN v2 using 1,024 input points without voting. Bold denotes the best result in the Performer group.}
    \label{tab:scanobjectnn}
    \small
    \setlength{\tabcolsep}{3pt}
    \renewcommand{\arraystretch}{0.95}
    \begin{tabularx}{\linewidth}{@{}Yc@{}}
        \toprule
        \textbf{Method} & \textbf{OA} \\
        \midrule
        \multicolumn{2}{@{}c@{}}{\textit{\textbf{MLP/CNN}}} \\
        \midrule
        PointNet~\citep{qi17}                       & 68.2 \\
        PointNet++~\citep{qi2017pointnet++}         & 77.9 \\
        SpiderCNN~\citep{xu2018spidercnn}           & 73.7 \\
        PointCNN~\citep{li2018pointcnn}             & 78.5 \\
        DGCNN~\citep{wang19}                        & 78.1 \\
        PointMLP~\citep{ma2022rethinking}           & 85.7 \\
        PointNeXt~\citep{qian2022pointnext}         & 87.7 \\
        \midrule
        \multicolumn{2}{@{}c@{}}{\textit{\textbf{Transformer}}} \\
        \midrule
        Point-BERT~\citep{yu2022point}              & 83.1 \\
        MaskPoint~\citep{liu2022masked}             & 84.3 \\
        Point-MAE~\citep{pang2023masked}            & 85.2 \\
        Point-M2AE~\citep{zhang2022point}           & 86.4 \\
        PCT~\citep{guo21} & 84.0 \\
        \midrule
        \multicolumn{2}{@{}c@{}}{\textit{\textbf{Mamba}}} \\
        \midrule
        PointMamba~\citep{liang2024pointmamba}      & 84.9 \\
        PCM-Tiny~\citep{li2025pamba}                 & 86.9 \\
        \midrule
        \multicolumn{2}{@{}c@{}}{\textit{\textbf{Performer}}} \\
        \midrule
        Performer (w/ PCT) & 83.2 \\
        \rowcolor{lightbestrow}
        \textbf{RelFlexformer (w/ PCT) (ours)} & \textbf{84.5} \\
        \rowcolor{bestrow}
        ~~+ PointRoPE~\citep{yue2025litept} & 84.3 \\
        \bottomrule
    \end{tabularx}
\end{minipage}
\end{table*}

\clearpage
\subsection{Segmentation}
\begin{table*}[h]
\centering
\begin{minipage}[t]{0.48\linewidth}
    \centering
    \caption{Semantic segmentation results on ScanNet v2. Val denotes validation mIoU. Bold denotes the best result in the Performer group.}
    \label{tab:scannetv2}
    \small
    \setlength{\tabcolsep}{3pt}
    \renewcommand{\arraystretch}{0.95}
    \begin{tabularx}{\linewidth}{@{}Yc@{}}
        \toprule
        \textbf{Method} & \textbf{Val} \\
        \midrule
        \multicolumn{2}{@{}c@{}}{\textit{\textbf{MLP/CNN}}} \\
        \midrule
        PointNet++~\citep{qi2017pointnet++}        & 53.5 \\
        SparseConvNet~\citep{graham20183d}         & 69.3 \\
        PointConv~\citep{wu2019pointconv}          & 61.0 \\
        JointPointBased~\citep{chiang2019unified}  & 69.2 \\
        KPConv~\citep{thomas2019kpconv}            & 69.2 \\
        PointASNL~\citep{yan2020pointasnl}         & 63.5 \\
        PointNeXt~\citep{qian2022pointnext}        & 71.5 \\
        LargeKernel3D~\citep{chen2022scaling}      & 73.5 \\
        PointMetaBase~\citep{lin2023meta}          & 72.8 \\
        MinkowskiNet~\citep{choy20194d}            & 72.2 \\
        \midrule
        \multicolumn{2}{@{}c@{}}{\textit{\textbf{Transformer}}} \\
        \midrule
        Fast Point Transformer~\citep{zhao2021point} & 72.1 \\
        Stratified Transformer~\citep{lai2022stratified} & 74.3 \\
        PointConvFormer~\citep{wu2023pointconvformer} & 74.5 \\
        OctFormer~\citep{wang2023octformer}        & 75.7 \\
        Swin3D~\citep{yang2025swin3d}              & 76.4 \\
        PTv1~\citep{zhao2021point}                 & 70.6 \\
        PTv2~\citep{wu2022point}                   & 75.4 \\
        PTv3~\citep{wu2024point}                   & 77.5 \\
        \midrule
        \multicolumn{2}{@{}c@{}}{\textit{\textbf{Mamba}}} \\
        \midrule
        Point Mamba~\citep{liu2024point}           & 74.6 \\
        Pamba~\citep{li2025pamba}                  & 77.6 \\
        \midrule
        \multicolumn{2}{@{}c@{}}{\textit{\textbf{Performer}}} \\
        \midrule
        Performer (w/ PTv3) & 74.8 \\
        \rowcolor{lightbestrow}
        \textbf{RelFlexformer (w/ PTv3) (ours)} & \textbf{76.9} \\
        \rowcolor{bestrow}
        ~~+ PointRoPE~\citep{yue2025litept} & 76.6 \\
        \bottomrule
    \end{tabularx}
\end{minipage}
\hfill
\begin{minipage}[t]{0.48\linewidth}
    \centering
    \caption{Semantic segmentation results on S3DIS (Area 5). Scores denote mIoU on Area 5. Bold denotes the best result in the Performer group.}
    \label{tab:s3dis_area5}
    \small
    \setlength{\tabcolsep}{3pt}
    \renewcommand{\arraystretch}{0.95}
    \begin{tabularx}{\linewidth}{@{}Yc@{}}
        \toprule
        \textbf{Method} & \textbf{Area 5} \\
        \midrule
        \multicolumn{2}{@{}c@{}}{\textit{\textbf{MLP/CNN}}} \\
        \midrule
        PointNet~\citep{qi17} & 41.1 \\
        SegCloud~\citep{tchapmi2017segcloud} & 48.9 \\
        TanConv~\citep{tatarchenko2018tangent} & 52.6 \\
        PointCNN~\citep{li2018pointcnn} & 57.3 \\
        ParamConv~\citep{wang2018deep} & 58.3 \\
        PointWeb~\citep{zhao2019pointweb} & 60.3 \\
        HPEIN~\citep{jiang2019hierarchical} & 61.9 \\
        KPConv~\citep{thomas2019kpconv} & 67.1 \\
        GACNet~\citep{wang2019graph} & 62.9 \\
        SPGraph~\citep{landrieu2018large} & 58.0 \\
        SegGCN~\citep{lei2020seggen} & 63.6 \\
        PAConv~\citep{xu2021paconv} & 66.6 \\
        PointNeXt~\citep{qian2022pointnext} & 70.5 \\
        PointMetaBase~\citep{lin2023meta} & 72.0 \\
        MinkUNet~\citep{choy20194d} & 65.4 \\
        \midrule
        \multicolumn{2}{@{}c@{}}{\textit{\textbf{Transformer}}} \\
        \midrule
        PAT~\citep{yang2019modeling} & 60.1 \\
        Stratified Transformer~\citep{lai2022stratified} & 72.0 \\
        Superpoint Transformer~\citep{robert2023efficient} & 68.9 \\
        Swin3D~\citep{yang2025swin3d} & 72.5 \\
        PTv1~\citep{zhao2021point} & 70.4 \\
        PTv2~\citep{wu2022point} & 71.6 \\
        PTv3~\citep{wu2024point} & 73.4 \\
        \midrule
        \multicolumn{2}{@{}c@{}}{\textit{\textbf{Mamba}}} \\
        \midrule
        PCM~\citep{zhang2025point} & 63.4 \\
        Pamba~\citep{li2025pamba} & 73.5 \\
        \midrule
        \multicolumn{2}{@{}c@{}}{\textit{\textbf{Performer}}} \\
        \midrule
        Performer (w/ PTv3) & 69.8 \\
        \rowcolor{lightbestrow}
        \textbf{RelFlexformer (w/ PTv3) (ours)} & 71.1 \\
        \rowcolor{bestrow}
        ~~+ PointRoPE~\citep{yue2025litept} & \textbf{72.1} \\
        \bottomrule
    \end{tabularx}
\end{minipage}
\end{table*}

\begin{table*}[h]
    \centering
    \caption{Semantic segmentation results on nuScenes. Val denotes validation mIoU. Bold denotes the best result in the Performer group.}
    \label{tab:nuscenes_semseg}
    \small
    \setlength{\tabcolsep}{3pt}
    \renewcommand{\arraystretch}{0.95}
    \begin{tabularx}{0.5\linewidth}{@{}Yc@{}}
        \toprule
        \textbf{Method} & \textbf{Val} \\
        \midrule
        \multicolumn{2}{@{}c@{}}{\textit{\textbf{MLP/CNN/SparseConv}}} \\
        \midrule
        SPVNAS~\citep{tang2020searching} & 77.4 \\
        Cylinder3D~\citep{zhu2021cylindrical} & 76.1 \\
        PVKD~\citep{hou2022point} & 76.0 \\
        2DPASS~\citep{yan20222dpass} & 80.8 \\
        MinkUNet~\citep{choy20194d} & 73.3 \\
        ~~+ M3Net~\citep{liu2024multi} & 79.0 \\
        ~~+ PPT~\citep{wu2024towards} & 78.6 \\
        OA-CNNs~\citep{peng2024oa} & 78.9 \\
        \midrule
        \multicolumn{2}{@{}c@{}}{\textit{\textbf{Transformer}}} \\
        \midrule
        SphereFormer~\citep{lai2023spherical} & 78.4 \\
        RangeFormer~\citep{kong2023rethinking} & 78.1 \\
        PTv2~\citep{wu2022point} & 80.2 \\
        PTv3~\citep{wu2024point} & 80.4 \\
        \midrule
        \multicolumn{2}{@{}c@{}}{\textit{\textbf{Mamba}}} \\
        \midrule
        Pamba~\citep{li2025pamba} & 80.4 \\
        \midrule
        \multicolumn{2}{@{}c@{}}{\textit{\textbf{Performer}}} \\
        \midrule
        Performer (w/ PTv3) & 72.0 \\
        \rowcolor{lightbestrow}
        \textbf{RelFlexformer (w/ PTv3) (ours)} & 80.3 \\
        \rowcolor{bestrow}
        ~~+ PointRoPE~\citep{yue2025litept} & \textbf{81.2} \\
        \bottomrule
    \end{tabularx}
\end{table*}

\clearpage
\subsection{Segmentation on RGB-D datasets}
\begin{table*}[h]
    \centering
    \caption{Segmentation results on NYU Depth v2. Bold denotes the best result in the Performer group.}
    \label{tab:nyudepthv2}
    \small
    \setlength{\tabcolsep}{3pt}
    \renewcommand{\arraystretch}{0.95}
    \begin{tabularx}{0.85\linewidth}{@{}Ylrc@{}}
        \toprule
        \textbf{Method} & \textbf{Backbone} & \textbf{Params} & \textbf{mIoU} \\
        \midrule
        ACNet~\citep{hu2019acnet}          & ResNet-50 & 116.6M & 48.3 \\
        SGNet~\citep{chen2021spatial}          & ResNet-101 & 64.7M & 51.1 \\
        SA-Gate~\citep{chen2020bi}        & ResNet-101 & 110.9M & 52.4 \\
        CEN~\citep{wang2020deep}            & ResNet-101 & 118.2M & 51.7 \\
        CEN~\citep{wang2020deep}            & ResNet-152 & 133.9M & 52.5 \\
        ShapeConv~\citep{cao2021shapeconv}      & ResNet-101 & 86.8M & 51.3 \\
        ESANet~\citep{,seichter2021efficient}         & ResNet-34 & 31.2M & 50.3 \\
        FRNet~\citep{zhou2022frnet}          & ResNet-34 & 85.5M & 53.6 \\
        PGDENet~\citep{zhou2022pgdenet}        & ResNet-34 & 100.7M & 53.7 \\
        EMSANet~\citep{seichter2022efficient}        & ResNet-34 & 46.9M & 51.0 \\
        TokenFusion~\citep{wang2022multimodal}    & MiT-B2 & 26.0M & 53.3 \\
        TokenFusion~\citep{wang2022multimodal}    & MiT-B3 & 45.9M & 54.2 \\
        MultiMAE~\citep{bachmann2022multimae}       & ViT-B & 95.2M & 56.0 \\
        Omnivore~\citep{girdhar2022omnivore}           & Swin-Tiny & 29.1M & 49.7 \\
        Omnivore~\citep{girdhar2022omnivore}           & Swin-Small & 51.3M & 52.7 \\
        Omnivore~\citep{girdhar2022omnivore}           & Swin-Base & 95.7M & 54.0 \\
        CMX~\citep{zhang2023cmx}                & MiT-B2 & 66.6M & 54.4 \\
        DFormer~\citep{dformer}                 & DFormer-Tiny  & 6.0M & 51.8 \\
        DFormer~\citep{dformer}                 & DFormer-Small & 18.7M & 53.6 \\
        DFormer~\citep{dformer}                 & DFormer-Base  & 29.5M & 55.6 \\
        AsymFormer~\citep{du2024asymformer}     & MiT-B0+ConvNeXt-Tiny & 33.0M & 55.3 \\
        \midrule
        Performer (w/ DFormer)                   & DFormer-Base & 29.5M & 54.4 \\
        \rowcolor{lightbestrow}
        \textbf{RelFlexformer (w/ DFormer) (ours)} & DFormer-Base & 29.5M & \textbf{55.3} \\
        \bottomrule
    \end{tabularx}
\end{table*}

\begin{table*}[h]
    \centering
    \caption{Segmentation results on SUN RGB-D. Bold denotes the best result in the Performer group.}
    \label{tab:sun_rgbd}
    \small
    \setlength{\tabcolsep}{3pt}
    \renewcommand{\arraystretch}{0.95}
    \begin{tabularx}{0.85\linewidth}{@{}Ylrc@{}}
        \toprule
        \textbf{Method} & \textbf{Backbone} & \textbf{Params} & \textbf{mIoU} \\
        \midrule
        ACNet~\citep{hu2019acnet}          & ResNet-50 & 116.6M & 48.1 \\
        SGNet~\citep{chen2021spatial}          & ResNet-101 & 64.7M & 48.6 \\
        SA-Gate~\citep{chen2020bi}        & ResNet-101 & 110.9M & 49.4 \\
        CEN~\citep{wang2020deep}            & ResNet-101 & 118.2M & 50.2 \\
        CEN~\citep{wang2020deep}            & ResNet-152 & 133.9M & 51.1 \\
        ShapeConv~\citep{cao2021shapeconv}      & ResNet-101 & 86.8M & 48.6 \\
        ESANet~\citep{,seichter2021efficient}         & ResNet-34 & 31.2M & 48.2 \\
        FRNet~\citep{zhou2022frnet}          & ResNet-34 & 85.5M & 51.8 \\
        PGDENet~\citep{zhou2022pgdenet}        & ResNet-34 & 100.7M & 51.0 \\
        EMSANet~\citep{seichter2022efficient}        & ResNet-34 & 46.9M & 48.4 \\
        TokenFusion~\citep{wang2022multimodal}    & MiT-B2 & 26.0M & 50.3 \\
        TokenFusion~\citep{wang2022multimodal}    & MiT-B3 & 45.9M & 51.0 \\
        MultiMAE~\citep{bachmann2022multimae}       & ViT-B & 95.2M & 51.1 \\
        CMX~\citep{zhang2023cmx}                & MiT-B2 & 66.6M & 49.7 \\
        DFormer~\citep{dformer}                 & DFormer-Tiny  & 6.0M & 48.8 \\
        DFormer~\citep{dformer}                 & DFormer-Small & 18.7M & 50.0 \\
        DFormer~\citep{dformer}                 & DFormer-Base  & 29.5M & 51.2 \\
        AsymFormer~\citep{du2024asymformer}     & MiT-B0+ConvNeXt-Tiny & 33.0M & 49.1 \\
        \midrule
        Performer (w/ DFormer)                   & DFormer-Base & 29.5M & 48.5 \\
        \rowcolor{lightbestrow}
        \textbf{RelFlexformer (w/ DFormer) (ours)} & DFormer-Base & 29.5M & \textbf{51.0} \\
        \bottomrule
    \end{tabularx}
\end{table*}

\clearpage
\section{Broader Impact}
\label{appendix:broader_impact}
 RelFlexFormer enables linear-time attention with relative position encoding for 3D perception (RGB-D segmentation, point cloud classification and segmentation). Practical         
  applications include robotics (real-time scene understanding), autonomous navigation, and AR/VR — domains where quadratic attention is a latency bottleneck on edge devices. The method is a drop-in replacement for softmax attention in existing architectures, lowering the compute barrier for deploying transformer-based 3D perception. We do not foresee negative societal impacts beyond those inherent to improved scene understanding (e.g., surveillance), which are shared with all 
  methods in this space. 

\section{Limitations}
\label{appendix:limitations}
We use $\exp(-\lambda|\mathbf{x}|)$ as our $\mathcal{F}$ throughout our work. While our work is general and can potentially use any $L_1$ function, the search for the function per domain remains an open question. Finally, our work only focuses on functions that have a well-defined Fourier transforms, thus extending it to any smooth function remains a challenge.

\end{document}